\definecolor{Gray}{gray}{0.8}
\definecolor{LightCyan}{rgb}{0.88,1,1}
\definecolor{green(html/cssgreen)}{rgb}{0.0, 0.5, 0.0}
\newtheorem{remark}{Remark}
\newtheorem{theorem}{Theorem}
\theoremstyle{definition}
\newtheorem{definition}{Definition}[section]
\newcommand{\x}{\mathbf{x}}
\newcommand{\y}{\mathbf{y}}
\newcommand{\btheta}{\boldsymbol{\theta}}
\newcommand{\f}{\mathbf{f}}
\newcommand{\ddt}{\frac{\rm d}{{\rm d}t}}
\newcommand{\thetaDot}{\,\dot{\!\boldsymbol{\theta}}} 
\newcommand{\distBound}{D}
\title{Generalization in Supervised Learning \\ Through Riemannian Contraction}
\author[1]{Leo Kozachkov \thanks{leokoz8@mit.edu}}
\author[2]{Patrick M. Wensing \thanks{pwensing@nd.edu}}
\author[1,3,4]{Jean-Jacques Slotine\thanks{jjs@mit.edu}}
\affil[1]{Department of Brain and Cognitive Sciences, MIT}
\affil[2]{Department of Aerospace and Mechanical Engineering, University of Notre Dame}
\affil[3]{Department of Mechanical Engineering, MIT}
\affil[4]{Google AI}
\date{}
\begin{document}

\maketitle

\begin{abstract}
We prove that Riemannian contraction in a supervised learning setting implies generalization. Specifically, we show that if an optimizer is contracting in some Riemannian metric with rate $\lambda > 0$, it is uniformly algorithmically stable with rate $\mathcal{O}(1/\lambda n)$, where $n$ is the number of labelled examples in the training set. The results hold for stochastic and deterministic optimization, in both continuous and discrete-time, for convex and non-convex loss surfaces. The associated generalization bounds reduce to well-known results in the particular case of gradient descent over convex or strongly convex loss surfaces. They can be shown to be optimal in  certain linear settings, such as kernel ridge regression under gradient flow.
\end{abstract}




\section{Introduction}
Our understanding of generalization in modern machine learning systems is lagging behind their empirical successes \citep{zhang2021understanding}.  These systems tend to be massively overparameterized, sometimes by several orders of magnitude \citep{allen2019learning,fedus2021switch}. It is therefore unsuprising that they can achieve zero training loss. What \textit{is} surprising is how well they can generalize. Their performance on held-out data is often very good. To understand this phenomenon, there has been an influx of theoretical research into establishing generalization bounds for iterative optimization algorithms such as gradient descent. In this work we focus on optimizers within a supervised learning setting, where we are given access to a number of labelled training points drawn from some underlying common distribution, as well as a loss function which quantifies performance. Within such a setting, we show that if an optimizer is \textit{contracting} \citep{lohmiller1998contraction} in some Riemannian metric (in a precise sense defined below) then it is algorithmically stable in every metric. Our theory applies to wide variety of common optimizers--for example gradient flows and stochastic minibatch gradient descent--operating over both convex and non-convex loss surfaces.

\begin{figure}[ht]
\centering
\includegraphics[width = \textwidth]{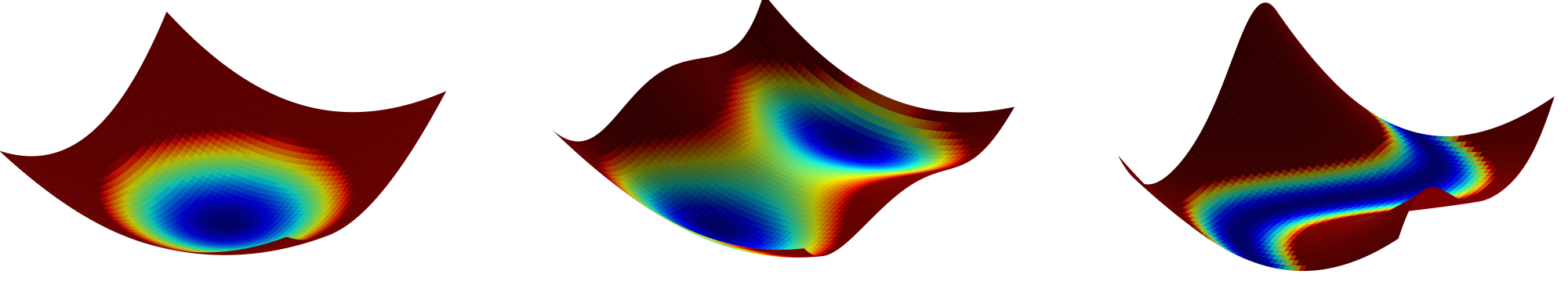}
\caption{Example loss surfaces for which our results apply. Left panel: strongly convex and convex loss surfaces (sections \ref{subsection:GD_strongly_convex} and \ref{subsubsection:convexlossesGD}). Middle panel: isolated local minima surrounded by basins of contraction (see Theorem \ref{theorem:local_robustness}). Right panel: valley of path-connected global minima (see section \ref{subsection:semi_contraction}).}
\label{fig:multi_loss}
\end{figure}

\subsection{Related Work}
Since the seminal work of \cite{bousquet2002stability}, \textit{algorithmic stability} has been used as a proxy for analyzing the generalization error of learning algorithms. There has been ample work analyzing the stability of empirical risk minimizers \citep{bousquet2002stability,mukherjee2006learning,shalev2009stochastic,shalev2010learnability}. 
A key early result in analyzing generalization in iterative optimization came from \citep{hardt2016train}, which established algorithmic stability for stochastic gradient methods. Later \citep{mou2018generalization} proved similar results for stochastic gradient Langevin dynamics. Shortly thereafter \citep{charles2018stability} showed that for loss functions satisfying certain geometrical constraints (e.g.,  Polyak-Łojasiewicz \citep{Polyak1963GradientMF}), any optimizer that converges to a global minimum is also algorithmically stable. Since then several follow-up works have analyzed the algorithmic stability of accelerated gradient methods, and the tradeoffs between optimization accuracy and algorithmic stability \citep{chen2018stability,ho2020instability,attia2021algorithmic}. The present work is similar in spirit to \cite{charles2018stability}, in the sense that we use an assumed stability property (in our case, contraction of optimizer trajectories) to derive generalization bounds for a wide class of optimizers. The following section introduces our supervised learning setting, which is the same as the one in \cite{hardt2016train}, as well as provides necessary background on algorithmic stability.

\subsection{Algorithmic Stability Background}
We consider a generic supervised learning setting where we have access to $n$ labelled examples, assumed to be drawn i.i.d from an unknown distribution $\mathcal{D}$ \citep{vapnik1999overview}. We collect these examples into a training set $S = (z_1, \dots,z_n)$.  The \textit{population risk} with respect to a loss function $\ell$ is defined as:
\[R[\boldsymbol{\theta}] = \mathbb{E}_{z \sim D} \hspace{.1cm} \ell(\boldsymbol{\theta},z)\]
where $\boldsymbol{\theta} \in \mathbb{R}^m$ describes a model. We assume that we do not know the population risk, so we use the \textit{empirical risk} as a proxy:
\[R_S[\boldsymbol{\theta}] = \frac{1}{n} \sum^{n}_{i = 1} \hspace{.1cm} \ell(\boldsymbol{\theta},z_i)\]
The difference between the population and empirical risk is denoted as the \textit{generalization error} of model $\boldsymbol{\theta}$:

\[\Delta^{gen}(\boldsymbol{\theta}) \equiv R[\boldsymbol{\theta}] - R_S[\boldsymbol{\theta}] \]
We now define the stability of an algorithm, and relate it to this generalization error. Consider an algorithm $\mathcal{A}$ which takes in $S$ and outputs a model (e.g., a parameter vector $\boldsymbol{\theta}$).

\begin{definition}[Uniform Algorithmic Stability]
An algorithm $\mathcal{A}$ is $\epsilon$-uniformly stable if for all data sets $S,S'$ such that $S$ and $S'$ differ in at most one example, we have

\begin{equation}\label{eq:eps-uniform-stable}
\sup_z \mathbb{E}_{\mathcal{A}} [\ell(\mathcal{A}(S);z) - \ell(\mathcal{A}(S');z)] \leq \epsilon
\end{equation}
\end{definition}
\noindent where the expectation is taken over the randomness of $\mathcal{A}$, if there is any. A fascinating result in learning theory states that uniform stability leads to generalization in expectation \citep{bousquet2002stability, shalev2010learnability,hardt2016train}. In particular we use Theorem 2.2 of \cite{hardt2016train}.
\begin{theorem}\label{theorem:gen_in_exp}
Let $\mathcal{A}$ be $\epsilon$-uniform stable and let $\mathbb{E}_{S,\mathcal{A}}$ denote an expectation taken over the samples $S$ and the randomness of $\mathcal{A}$. Then, $|\mathbb{E}_{S,\mathcal{A}}\Big[\Delta^{gen}(\mathcal{A}(S))\Big]| \leq \epsilon$.
\end{theorem}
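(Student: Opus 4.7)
The plan is to prove this via the classical ``ghost sample / rename-variables'' symmetry argument, with uniform stability providing the punchline. First I would introduce an independent i.i.d.\ auxiliary sample $S' = (z'_1,\dots,z'_n) \sim \mathcal{D}^n$, drawn independently of both $S$ and the internal randomness of $\mathcal{A}$. For each $i \in \{1,\dots,n\}$ define the ``swapped'' dataset
\[
S^{(i)} = (z_1,\dots,z_{i-1},\,z'_i,\,z_{i+1},\dots,z_n),
\]
so that $S$ and $S^{(i)}$ differ in exactly one example.

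The key step is a relabeling identity: since all the $z_j$ and $z'_i$ are i.i.d., the joint distribution of $(S, z'_i)$ is invariant under swapping the roles of the $i$-th coordinate of $S$ and $z'_i$. This yields
\[
\mathbb{E}_{S,\mathcal{A}}\bigl[\ell(\mathcal{A}(S); z_i)\bigr] \;=\; \mathbb{E}_{S,S',\mathcal{A}}\bigl[\ell(\mathcal{A}(S^{(i)}); z'_i)\bigr].
\]
I would then invoke $\epsilon$-uniform stability to swap $\mathcal{A}(S^{(i)})$ for $\mathcal{A}(S)$ inside the expectation, at a cost of at most $\epsilon$ in absolute value (this is exactly \eqref{eq:eps-uniform-stable} applied to the pair $S, S^{(i)}$ and the test point $z'_i$). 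After this swap, $z'_i$ is independent of $S$ and distributed as $\mathcal{D}$, so $\mathbb{E}_{z'_i}[\ell(\mathcal{A}(S); z'_i)] = R[\mathcal{A}(S)]$, giving
\[
\bigl|\mathbb{E}_{S,\mathcal{A}}[\ell(\mathcal{A}(S); z_i)] - \mathbb{E}_{S,\mathcal{A}}[R[\mathcal{A}(S)]]\bigr| \;\le\; \epsilon
\]
for every $i$. Averaging this bound over $i=1,\dots,n$ on the left recovers $\mathbb{E}_{S,\mathcal{A}}[R_S[\mathcal{A}(S)]]$, and the right-hand side remains $\epsilon$, which gives
\[
\bigl|\mathbb{E}_{S,\mathcal{A}}[R_S[\mathcal{A}(S)] - R[\mathcal{A}(S)]]\bigr| \le \epsilon,
\]
i.e.\ the desired bound on $|\mathbb{E}_{S,\mathcal{A}}[\Delta^{gen}(\mathcal{A}(S))]|$.

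The only subtle step is the relabeling identity; everything else is essentially linearity of expectation plus one application of the stability hypothesis. I expect the main obstacle to be purely notational: being explicit about over what randomness each expectation is taken, and justifying that $\mathcal{A}$'s internal randomness can be treated as independent of the dataset relabeling so that the supremum-over-$z$ form of \eqref{eq:eps-uniform-stable} applies pointwise in $(S, S')$ before taking the outer expectation.
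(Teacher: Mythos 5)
Your proof is correct. The paper does not prove this theorem itself—it cites it directly as Theorem~2.2 of \citet{hardt2016train}—and the ghost-sample / symmetrization argument you give is exactly the proof used there (and in \citet{bousquet2002stability}, \citet{shalev2010learnability}). The relabeling identity is justified by the exchangeability of $z_i$ and $z'_i$ under the product measure $\mathcal{D}^{2n}$, together with the independence of $\mathcal{A}$'s internal randomness from the data, which is precisely the subtle point you flag; once that is in place the rest is linearity of expectation and one application of \eqref{eq:eps-uniform-stable} pointwise in $(S,S^{(i)},z'_i)$.
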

If the output of $\mathcal{A}$ is some parameter vector $\boldsymbol{\theta}$ and we assume that our loss function is $L$-Lipshitz for every example $z_i$ with respect to some norm $||\cdot||$, then the difference between two trajectories of an optimizer trained on set $S$ and $S'$ can be used to bound the generalization error, because:

\begin{equation}\label{eq:lip_ass}
\mathbb{E}_{\mathcal{A}}[|\ell(\boldsymbol{\theta}_S,z)- \ell(\boldsymbol{\theta}_{S'},z)|] \leq L\, \mathbb{E}_{\mathcal{A}}||\boldsymbol{\theta}_S - \boldsymbol{\theta}_{S'}|| 
\end{equation}
Rather than only considering the Euclidean distance $||\boldsymbol{\theta}_S - \boldsymbol{\theta}_{S'}||$, in this paper we consider the \textit{geodesic distance} $d_\mathcal{M}(\boldsymbol{\theta}_S, \boldsymbol{\theta}_{S'})$ computed on a Riemannian manifold $\mathcal{M} = (\mathbb{R}^m,\mathbf{M})$ (Figure \ref{fig:geodesic}). Here $\mathbf{M}(\boldsymbol{\theta},t) \in \mathbb{R}^{m \times m}$ is the positive definite metric associated to $\mathcal{M}$. There are many optimization settings for which the geodesic distance between two points --as opposed to the Euclidean norm--is the more natural distance measure to consider \citep{amari1998natural,wensing2020beyond}. The main takeaway of this paper is that: \textit{Riemannian contraction implies generalization in supervised learning}. The details about this generalization (e.g., its dependence on the number of samples $n$ and the training time $T$) depend on the dynamical equations of the optimizer, as well as the geometry of the loss landscape, as we will see. We now provide background on nonlinear contraction analysis before stating our results.

\subsection{Nonlinear Contraction Theory Background}
Consider a state vector $\mathbf{x} \in \mathbb{R}^m$, evolving according to the continuous-time dynamics:
\begin{equation}\label{eq:dynamical_system}
\dot{\mathbf{x}} = \mathbf{f}(\mathbf{x},t)
\end{equation}
here it is assumed that all quantities are real and smooth, so any required derivative or partial derivative exists and is continuous. Then we have the following definition:

\begin{definition}[Contracting Dynamical System]\label{definition: contracting_system} 
Denote the Jacobian of \eqref{eq:dynamical_system} by $\mathbf{J} \equiv \frac{\partial \mathbf{f}}{\partial \mathbf{x}}(\mathbf{x},t)$. If there exists a symmetric positive-definite metric $\mathbf{M}(\x,t): \mathbb{R}^m\times\mathbb{R} \rightarrow \mathbb{R}^{m \times m}$ and  a scalar $\lambda > 0$ such that the following \textit{differential Lyapunov equation} is uniformly satisfied in space and time:
\begin{equation}\label{eq:cont_contraction_equation}
\dot{\mathbf{M}} +\mathbf{M}\mathbf{J} + \mathbf{J}^T\mathbf{M} \leq -2\lambda \mathbf{M}
\end{equation}
then the geodesic distance defined with respect to $\mathbf{M}$ between any two trajectories of \eqref{eq:dynamical_system} converges to zero exponentially, with rate $\lambda$, and \eqref{eq:dynamical_system} is said to be \textit{contracting}. Discrete-time contraction can be defined similarly \citep{lohmiller1998contraction}.
\end{definition}

\begin{figure}[ht]
\centering
\includegraphics[width = .5\textwidth]{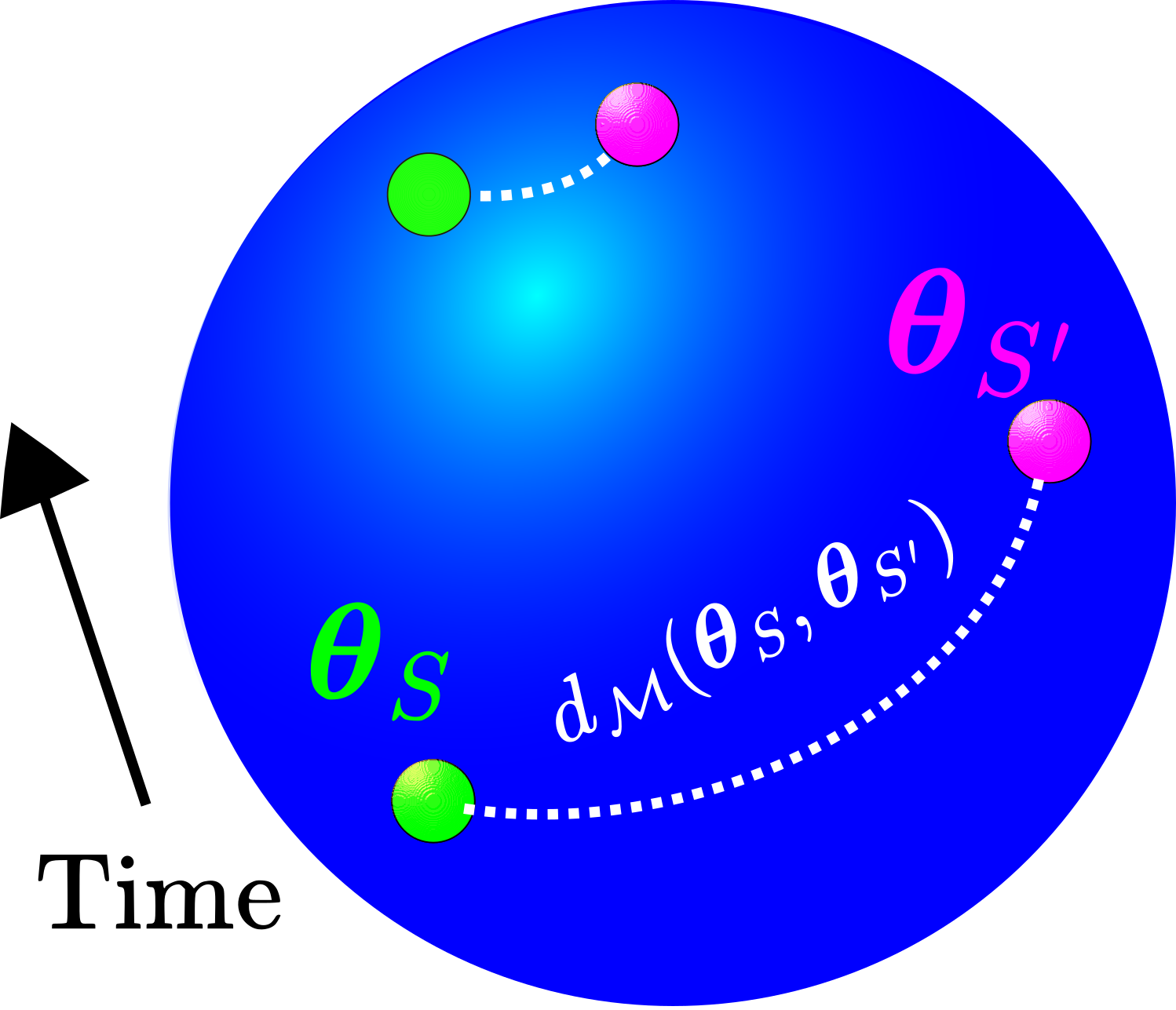}
\caption{The geodesic distance between optimizer trajectories $\boldsymbol{\theta}_S$ and $\boldsymbol{\theta}_{S'}$. If the optimizer is contracting with rate $\lambda$, this distance, denoted $d_{\mathcal{M}}(\boldsymbol{\theta}_S,\boldsymbol{\theta}_{S'})$, shrinks until the two trajectories are within a ball of radius $\mathcal{O}(\frac{1}{n \lambda})$. }
\label{fig:geodesic}
\end{figure}

\subsubsection{Robustness of Contracting Systems}
Contracting systems are robust to disturbances, in the following sense. Assume that \eqref{eq:dynamical_system} is contracting in metric $\mathbf{M} = \mathbf{T}(\mathbf{x},t)^T\mathbf{T}(\mathbf{x},t)$ with rate $\lambda$. Now consider the same dynamics as \eqref{eq:dynamical_system}, perturbed with some disturbance:

\begin{equation}\label{eq:perturbed_dynamics}
\dot{\mathbf{x}}_p = \mathbf{f}(\mathbf{x}_p,t) + \mathbf{d}(\mathbf{x}_p,t)    
\end{equation}
The geodesic distance $d_\mathcal{M}(\mathbf{x},\mathbf{x}_p)$
satisfies the \textit{differential inequality}:

\begin{equation}\label{eq:contraction_robustness_ineq}
\ddt d_\mathcal{M}(\mathbf{x},\mathbf{x}_p) + \lambda d_\mathcal{M}(\mathbf{x},\mathbf{x}_p) \leq ||\mathbf{T}(\mathbf{x},t)\mathbf{d}(\mathbf{x}_p,t)||
\end{equation}
Assuming there exists a finite constant $\distBound$ such that $||\mathbf{d}(\mathbf{x}_p,t)|| \leq \distBound$ uniformly, \eqref{eq:contraction_robustness_ineq} implies:

\begin{equation}\label{eq:contraction_robustness_equation}
R(t) \leq \chi R(0) e^{-\lambda t} + \frac{\distBound \chi}{\lambda}   
\end{equation}
where $R(t) \equiv ||\mathbf{x}(t)-\mathbf{x}_p(t)||$ and $\chi$ denotes an upper-bound on the condition number of $\mathbf{T}$. Likewise for the discrete-time dynamics contracting in some metric with rate $0 < \mu < 1$:

\[ \mathbf{x}_{t+1} = \mathbf{f}(\mathbf{x}_t,t)\]
the analogous result is:

\begin{equation}\label{eq:discrete_contraction_robustness_equation}
R(t) \leq \chi R(0)\mu^t + \frac{\distBound \chi}{(1-\mu)}   
\end{equation}
For proofs of these statements we refer the reader to \cite{lohmiller1998contraction} (section 3.7, vii) as well as \cite{del2012contraction} and \cite{zhang2021adversarially} Proposition 1 in the appendix. 

If we interpret \eqref{eq:dynamical_system} as an algorithm, then the only source of indeterminacy in this algorithm is the initial condition $\x(0)$. Therefore if \eqref{eq:dynamical_system} is always initialized within a ball of radius $C/2$ of some reference point, then \eqref{eq:contraction_robustness_equation} may be stated in expectation:

\begin{equation}\label{eq:expected_robustness_IC}
\mathbb{E}_{\mathcal{A}}[R(t)] \leq \mathbb{E}_{\mathcal{A}}[\chi R(0) e^{-\lambda t} + \frac{\distBound \chi}{\lambda} ] \leq \chi C e^{-\lambda t} + \frac{\distBound \chi}{\lambda}     
\end{equation}
where we have used the linearity of the expectation value operator, as well as the assumption $\mathbb{E}_{\mathcal{A}}[R(0)]\leq C$.

\subsubsection{Geodesics and Bounded Distortions}
To ensure that our results are coordinate-free, we show that the `distortion factor' between the geodesic distances computed along two different manifolds $\mathcal{M}_1$ and $\mathcal{M}_2$ is uniformly bounded. The practical implication is that geodesic distances as measured in two different metrics can differ by no more than a constant factor, which precludes any situation where a system is stable in one metric (geodesic distances between trajectories shrink to zero) and not stable in another metric (geodesic distances do not shrink to zero). 

\begin{theorem}\label{theorem:bounded_distortion}
Consider two Riemannian metrics $\mathbf{M}_1(\mathbf{x},t)$ and $\mathbf{M}_2(\mathbf{x},t)$ satisfying:

\[ \mu_1 \mathbf{I}\preceq \mathbf{M}_1(\mathbf{x},t) \preceq L_1\mathbf{I} \hspace{1cm}\text{and}\hspace{1cm} \mu_2 \mathbf{I}\preceq \mathbf{M}_2(\mathbf{x},t) \preceq L_2\mathbf{I}\]
with $\mu_i,L_i > 0$. Then the corresponding geodesic distances evaluated between two points, $\mathbf{x}$ and $\mathbf{y}$,  satisfy the bound:

\[ \sqrt{\frac{\mu_1}{L_2}}\leq \frac{d_{\mathcal{M}_1}(\mathbf{x},\mathbf{y})}{d_{\mathcal{M}_2}(\mathbf{x},\mathbf{y})} \leq \sqrt{\frac{L_1}{\mu_2}}\]

\end{theorem}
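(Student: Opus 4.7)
The plan is to work directly from the variational definition of geodesic distance and propagate the pointwise eigenvalue bounds on the metrics into pointwise bounds on path lengths, and then into bounds on the infima.

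First I would recall that for any Riemannian metric $\mathbf{M}_i$ on $\mathbb{R}^m$, the geodesic distance is
\[
d_{\mathcal{M}_i}(\mathbf{x},\mathbf{y}) \;=\; \inf_{\gamma \in \Gamma(\mathbf{x},\mathbf{y})} \int_0^1 \sqrt{\dot{\gamma}(s)^T \mathbf{M}_i(\gamma(s),t)\, \dot{\gamma}(s)}\, ds,
\]
where $\Gamma(\mathbf{x},\mathbf{y})$ is the set of piecewise-smooth curves joining $\mathbf{x}$ to $\mathbf{y}$. The hypothesis $\mu_i\mathbf{I}\preceq \mathbf{M}_i \preceq L_i \mathbf{I}$ gives, for any tangent vector $\mathbf{v}$,
\[
\sqrt{\mu_i}\,\|\mathbf{v}\| \;\le\; \sqrt{\mathbf{v}^T \mathbf{M}_i \mathbf{v}} \;\le\; \sqrt{L_i}\,\|\mathbf{v}\|.
\]

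Next I would combine these pointwise inequalities to compare the two metrics directly. For any tangent vector $\mathbf{v}$ at any point,
\[
\sqrt{\mathbf{v}^T \mathbf{M}_1 \mathbf{v}} \;\le\; \sqrt{L_1}\|\mathbf{v}\| \;\le\; \sqrt{L_1/\mu_2}\,\sqrt{\mathbf{v}^T \mathbf{M}_2 \mathbf{v}}.
\]
Integrating over any admissible curve $\gamma \in \Gamma(\mathbf{x},\mathbf{y})$ yields the path-length bound
\[
\ell_{\mathcal{M}_1}(\gamma) \;\le\; \sqrt{L_1/\mu_2}\, \ell_{\mathcal{M}_2}(\gamma).
\]
Taking the infimum of the right-hand side over $\gamma$ gives $d_{\mathcal{M}_1}(\mathbf{x},\mathbf{y}) \le \sqrt{L_1/\mu_2}\, d_{\mathcal{M}_2}(\mathbf{x},\mathbf{y})$, which is the upper bound claimed.

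For the lower bound I would simply repeat the argument with the roles of $\mathbf{M}_1$ and $\mathbf{M}_2$ swapped: one has $\sqrt{\mathbf{v}^T\mathbf{M}_2\mathbf{v}} \le \sqrt{L_2/\mu_1}\sqrt{\mathbf{v}^T\mathbf{M}_1\mathbf{v}}$, hence $d_{\mathcal{M}_2}(\mathbf{x},\mathbf{y}) \le \sqrt{L_2/\mu_1}\, d_{\mathcal{M}_1}(\mathbf{x},\mathbf{y})$, and rearranging gives $\sqrt{\mu_1/L_2} \le d_{\mathcal{M}_1}/d_{\mathcal{M}_2}$. There is no genuine obstacle here: the only subtlety is making sure the pointwise bound is applied uniformly in $(\mathbf{x},t)$ so that it survives integration along any candidate path and the subsequent passage to the infimum. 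The uniform hypothesis $\mu_i\mathbf{I}\preceq \mathbf{M}_i(\mathbf{x},t)\preceq L_i \mathbf{I}$ supplies exactly that uniformity, so the two one-sided infimum arguments close immediately and yield the two-sided inequality in the statement.
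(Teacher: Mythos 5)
Your proof is correct. It differs from the paper's only in packaging: the paper cites a known lemma (Proposition D.2 of \citealt{boffi2021regret}) bounding each geodesic distance above and below by multiples of the Euclidean distance, namely $\sqrt{\mu_i}\,\|\mathbf{x}-\mathbf{y}\|_2 \le d_{\mathcal{M}_i}(\mathbf{x},\mathbf{y}) \le \sqrt{L_i}\,\|\mathbf{x}-\mathbf{y}\|_2$, and then simply takes the ratio of the two sandwiches. You instead derive the comparison from scratch at the level of the length functionals, which is a bit more self-contained and avoids one ingredient the paper's cited lemma secretly uses: the lower bound $\sqrt{\mu_i}\,\|\mathbf{x}-\mathbf{y}\|_2 \le d_{\mathcal{M}_i}$ requires knowing that the Euclidean geodesic between two points is the straight segment (so that $\int_0^1 \|\dot\gamma\|\,ds \ge \|\mathbf{x}-\mathbf{y}\|_2$ for every admissible curve). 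By comparing $\ell_{\mathcal{M}_1}(\gamma)$ directly to $\ell_{\mathcal{M}_2}(\gamma)$ on the same curve and only then passing to the infimum, you never need to identify the Euclidean minimizer. One small presentational nit: the step from $\ell_{\mathcal{M}_1}(\gamma)\le \sqrt{L_1/\mu_2}\,\ell_{\mathcal{M}_2}(\gamma)$ to the distance inequality is cleanest if you first note $d_{\mathcal{M}_1}(\mathbf{x},\mathbf{y})\le \ell_{\mathcal{M}_1}(\gamma)$ for every $\gamma$, and \emph{then} take the infimum of the right-hand side over $\gamma$; you say roughly this, but stating it in that order makes the logic of ``infimum on one side at a time'' unambiguous.
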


\begin{proof}
 The geodesic distances corresponding to these two metrics, evaluated between two points, $\mathbf{x}$ and $\mathbf{y}$  each are bounded in terms of the Euclidean norm as follows (see e.g \citep{boffi2021regret} Proposition D.2):

\[ \sqrt{\mu_1}||\mathbf{x}-\mathbf{y}||_2 \leq d_{\mathcal{M}_1}(\mathbf{x},\mathbf{y}) \leq \sqrt{L_1}||\mathbf{x}-\mathbf{y}||_2 \hspace{0.1cm}\text{and}\hspace{0.1cm} \sqrt{\mu_2}||\mathbf{x}-\mathbf{y}||_2 \leq d_{\mathcal{M}_2}(\mathbf{x},\mathbf{y}) \leq \sqrt{L_2}||\mathbf{x}-\mathbf{y}||_2\]
which implies that the distortion between $d_{\mathcal{M}_1}(\mathbf{x},\mathbf{y})$ and $d_{\mathcal{M}_2}(\mathbf{x},\mathbf{y})$ (as measured by their ratio) is bounded as follows:

\[ \sqrt{\frac{\mu_1}{L_2}}\leq \frac{d_{\mathcal{M}_1}(\mathbf{x},\mathbf{y})}{d_{\mathcal{M}_2}(\mathbf{x},\mathbf{y})} \leq \sqrt{\frac{L_1}{\mu_2}}\]
\end{proof}

\section{Main Results} 

\subsection{Contracting Optimizers are Algorithmically Stable}
In this section we prove our main result for continuous-time optimizers using the entire training batch. We start with this case because it is the simplest. Later on, we provide the same result for stochastic, discrete-time optimizers such as mini-batch stochastic gradient descent. We assume that our parameter update is \textit{sum-separable} with respect to training set $S$:

\begin{equation}\label{eq:learning_dynamics_contraction_S}
\thetaDot_S = \mathbf{G}(\boldsymbol{\theta}_S,S) = \frac{1}{n}\sum^n_{i = 1} \mathbf{g}(\boldsymbol{\theta}_S,z_i)     
\end{equation}
 In this case the output of algorithm $\mathcal{A}(S)$ is the vector $\boldsymbol{\theta}_S$ obtained by simulating \eqref{eq:learning_dynamics_contraction_S} for time $t$. We also assume that $||\mathbf{g}|| \leq \xi$, for some constant $\xi$. If we interpret $\mathbf{g}$ as the gradient of some loss $\ell$, then this corresponds to assuming that $\ell$ is $\xi$-Lipschitz. Finally, we assume that the optimizer is always initialized--perhaps randomly-- within a ball of radius $C/2$ around some reference point. Now consider the same parameter update with respect to training set $S'$, which differs from $S$ in one example:

\begin{equation}\label{eq:learning_dynamics_contraction_S'}
\thetaDot_{S'} = \mathbf{G}(\boldsymbol{\theta}_{S'},{S'}) 
\end{equation}
We can now state our first main result:

\begin{theorem}\label{theorem:Contraction_Implies_AS}[Contraction Implies Algorithmic Stability]
If the dynamics \eqref{eq:learning_dynamics_contraction_S} are contracting in metric $\mathbf{M} = \mathbf{T}(\boldsymbol{\theta},t)^T\mathbf{T}(\boldsymbol{\theta},t)$ with rate $\lambda$, then $\mathcal{A}$ is uniformly $\epsilon$-stable, with:

\begin{equation}\label{eq:robustness_gen_result}
\epsilon \leq \chi L e^{-\lambda t}C + \frac{2\chi L \xi}{\lambda n}    
\end{equation}
where $\chi$ denotes a uniform upper-bound on the condition number of $\mathbf{T}(\boldsymbol{\theta},t)$. Going forward we refer to $\epsilon_{stab} \equiv  \frac{2\chi L \xi}{\lambda n}$.

\end{theorem}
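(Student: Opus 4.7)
The plan is to view $\btheta_{S'}$ as a perturbed trajectory of the $S$-dynamics and then invoke the expected robustness bound \eqref{eq:expected_robustness_IC}. First I would add and subtract $\mathbf{G}(\btheta_{S'},S)$ on the right-hand side of \eqref{eq:learning_dynamics_contraction_S'} to rewrite it as
\[
\thetaDot_{S'} \;=\; \mathbf{G}(\btheta_{S'},S) \;+\; \mathbf{d}(\btheta_{S'},t), \qquad \mathbf{d}(\btheta_{S'},t) \;\equiv\; \mathbf{G}(\btheta_{S'},S') - \mathbf{G}(\btheta_{S'},S).
\]
In this form $\btheta_{S'}$ obeys the very same reference system as $\btheta_S$ up to an additive disturbance $\mathbf{d}$, so the setup of \eqref{eq:perturbed_dynamics} applies with $\mathbf{x}=\btheta_S$ and $\mathbf{x}_p=\btheta_{S'}$.

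The second step would be to bound the disturbance uniformly. Because $S$ and $S'$ differ in exactly one example, say at index $i^\star$, the sum-separable structure of \eqref{eq:learning_dynamics_contraction_S} causes all other terms to cancel, leaving $\mathbf{d}(\btheta_{S'},t) = \tfrac{1}{n}\bigl[\mathbf{g}(\btheta_{S'},z'_{i^\star}) - \mathbf{g}(\btheta_{S'},z_{i^\star})\bigr]$. The triangle inequality together with the uniform bound $\|\mathbf{g}\|\le \xi$ then yields $\|\mathbf{d}(\btheta_{S'},t)\|\le 2\xi/n$ for every $\btheta_{S'}$ and $t$, so one may take $\distBound = 2\xi/n$ in the hypothesis of \eqref{eq:contraction_robustness_equation}.

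Next I would invoke \eqref{eq:expected_robustness_IC} with this value of $\distBound$. The initial Euclidean separation satisfies $\mathbb{E}_{\mathcal{A}}\,R(0)\le C$, since both trajectories are initialized inside a ball of radius $C/2$ about a common reference point and the triangle inequality gives $R(0)\le C$ even pointwise. This yields
\[
\mathbb{E}_{\mathcal{A}}\|\btheta_S(t) - \btheta_{S'}(t)\| \;\le\; \chi C\,e^{-\lambda t} \;+\; \frac{2\xi\chi}{\lambda n}.
\]
Combining this with the parameter-Lipschitz bound \eqref{eq:lip_ass} and taking the supremum over test points $z$ then delivers the claimed constant $\epsilon \leq \chi L C e^{-\lambda t} + 2\chi L\xi/(\lambda n)$.

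The hard part, modest though it is, lies in step two: the robustness inequality \eqref{eq:contraction_robustness_equation} requires the disturbance bound $\distBound$ to be uniform in both space and time, and it is precisely the global assumption $\|\mathbf{g}\|\le\xi$ (equivalently, the $\xi$-Lipschitzness of $\ell$ when $\mathbf{g}=\nabla\ell$) that delivers this. A secondary caveat is that the condition-number factor $\chi$ converting between geodesic and Euclidean distances must itself be uniform over all $\btheta$ and $t$; Theorem~\ref{theorem:bounded_distortion} guarantees exactly this kind of bounded distortion whenever the metric is uniformly bounded above and below, so no further work is needed.
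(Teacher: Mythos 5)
Your proposal is correct and follows essentially the same route as the paper's own proof: view $\btheta_{S'}$ as the $S$-dynamics perturbed by the additive disturbance $\mathbf{d}=\mathbf{G}(\btheta_{S'},S')-\mathbf{G}(\btheta_{S'},S)$, use sum-separability and $\|\mathbf{g}\|\le\xi$ to get $\distBound=2\xi/n$, and then invoke \eqref{eq:expected_robustness_IC} together with the Lipschitz bound \eqref{eq:lip_ass}. The only (cosmetic) difference is that your decomposition adds and subtracts the whole $\mathbf{G}(\btheta_{S'},S)$ term up front, which is a slightly cleaner way to write what the paper does index-by-index; the disturbance bound, the use of the robustness result, and the final constants are identical.
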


\begin{proof}

The goal is to write \eqref{eq:learning_dynamics_contraction_S'} as a perturbed version of \eqref{eq:learning_dynamics_contraction_S} and then apply the robustness property of contracting systems to yield the result. Note that:

\[\thetaDot_{S'} = \frac{1}{n}\sum^n_{i = 1} \mathbf{g}(\boldsymbol{\theta}_{S'},z_i) - \frac{1}{n}(\mathbf{g}(\boldsymbol{\theta}_{S'},z_i) - \mathbf{g}(\boldsymbol{\theta}_{S'},z'_i)) \]
where we have just subtracted out the term involving $z_i$ from the sum, and added in the replacement term $z'_i$. This may be viewed as a perturbed version of \eqref{eq:learning_dynamics_contraction_S}, with disturbance:

\[ ||\mathbf{d}(\boldsymbol{\theta}_{S'},z_i,z'_i,n)|| = || \frac{1}{n}(\mathbf{g}(\boldsymbol{\theta}_{S'},z_i) - \mathbf{g}(\boldsymbol{\theta}_{S'},z'_i))|| \leq \frac{2\xi}{n} = \distBound \]
Plugging $\distBound$ into \eqref{eq:expected_robustness_IC}, multiplying through by $L$ because of \eqref{eq:lip_ass}, and taking the expectation $\mathbb{E}_{\mathcal{A}}$ to produce $R(0)$ yields the result. 
\end{proof}

\begin{remark}[Leave-One-Out Stability]
As pointed out in \citep{bousquet2020sharper}, for interpolation algorithms (such as, e.g., the highly overparameterized searches common in deep learning) it is more meaningful to analyze \emph{leave-one-out} stability, rather than replace-one stability as we just did. In this case the same dynamical robustness argument applies immediately, so that $\distBound$ and therefore $\epsilon_{stab}$ are just reduced by a factor of two. 
\end{remark} 

\begin{remark}[Generalization with High Probability]
A well-known limitation of using algorithmic stability to derive generalization bounds is that the bounds only hold in expectation. However, one can use Chebyshev's inequality to derive generalization bounds that hold with high probability \citep{bousquet2002stability, elisseeff2005stability, feldman2019high,bousquet2020sharper}. It is well known that these bounds are tight in the case when algorithmic stability scales with $1/n$, see e.g., Theorem 12 and Remark 13 in \citep{bousquet2002stability}.  Theorem \ref{theorem:Contraction_Implies_AS} shows that this is the case for contracting optimizers. In Section~\ref{stochastic}, Theorem \ref{theorem:Contraction_Implies_AS_discrete} will show that this $1/n$ scaling also holds for the stochastic optimization case. 

\end{remark} 

\begin{remark}[Scaling Dynamics Does not Change Generalization Rate]
Note that if we `speed up' the dynamics \eqref{eq:learning_dynamics_contraction_S} by some factor $\beta > 0$:
\[\mathbf{G}(\boldsymbol{\theta}_S,S) \rightarrow \beta\mathbf{G}(\boldsymbol{\theta}_S,S) \]
one might intuitively expect the contraction rate to be scaled by $\beta$ as well ($\lambda \rightarrow \beta \lambda$), which would allow an arbitrary increase of the rate of generalization in  \eqref{eq:robustness_gen_result} by simply increasing $\beta$. Note however that this is prevented by the presence of $\xi$ in \eqref{eq:robustness_gen_result}, which is also scaled by $\beta$. The $\beta$ terms in the numerator and denominator therefore cancel out, leaving $\epsilon_{stab}$ unchanged.
\end{remark}

\begin{remark}[Lipschitz Assumption]\label{remark: lip_ass_caveat}
As pointed out in \cite{hardt2016train}, there are cases where $L$ as defined in \eqref{eq:lip_ass} may not exist. For example strongly convex functions have unbounded gradients on $\mathbb{R}^m$. In this case we will overload the symbol $L$ to be:
\[L = \sup_{\boldsymbol{\theta} \in \Omega}\sup_{z} ||\nabla \ell(\boldsymbol{\theta},z)||_2\]
where $\Omega$ denotes a convex, compact set over which we are optimizing. For contracting optimizers and $\beta$-smooth loss functions, $L$ is always finite. This is because contraction precludes finite escape \citep{lohmiller1998contraction}, and therefore $\text{diameter}({\Omega})$ is well defined and we have $L \leq \beta \text{diameter}({\Omega})$. \end{remark}

\begin{figure}[ht]
\centering
\includegraphics[width = \textwidth]{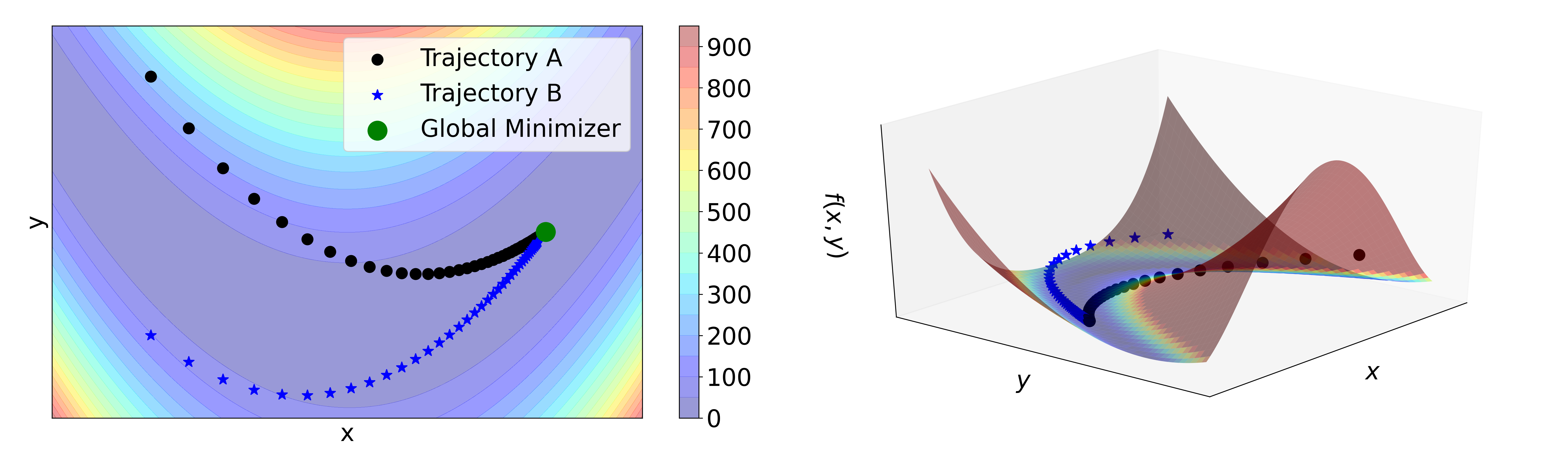}
\caption{\textit{Left subplot)} Two trajectories of a contracting optimizer, seeded from two different initial conditions, evolving over a non-convex loss surface (Rosenbrock function, $f(x,y) =100(x^2 - y)^2 + (x-1)^2$). Both exponentially converge to the global minimizer of the function. Trajectories superimposed over contour plot of the loss surface. \textit{Right subplot)} A different view of the same optimization process, more clearly displaying the non-convexity of the loss surface.}
\label{fig:rosenbrock}
\end{figure}

\subsection{Stochastic, Contracting Optimizers are Algorithmically Stable}\label{stochastic}
In this section we show that a variant of Theorem \ref{theorem:Contraction_Implies_AS} holds for stochastic, discrete-time optimizers (for example mini-batch stochastic gradient descent). Consider the iterative optimizer:

\begin{equation}\label{eq:discrete_update}
\boldsymbol{\theta}^{S}_{t+1} = \frac{1}{b} \sum_{i = 1}^b \mathbf{g}(\boldsymbol{\theta}^{S}_t,z_i) 
\end{equation}
where $ 1 \leq b \leq n$ is the size of the mini-batch and $z_i$ are samples drawn randomly from set $S$. As before we assume that $\mathbf{g}$ is smooth and bounded as $||\mathbf{g}|| \leq \xi$. Since \eqref{eq:discrete_update} defines a discrete-time, random dynamical system \citep{tabareau2013contraction} we have to define what we mean by `contraction'. In particular we will rely on an assumption of `contraction in expectation', by which we mean the following. Consider two instantiations of the same discrete-time, random dynamical system:

\[\x_{t+1} = \f(\x_t,t,\Gamma) \hspace{1cm} \text{and} \hspace{1cm} \y_{t+1} = \f(\y_t,t,\Gamma) \]
where $\Gamma$ denotes a \textit{particular realization} of a stochastic process which is the same for both $\x$ and $\y$. In our case, this stochasticity stems from the random sampling of training set datapoints to form a mini-batch. We will say that this system is \textit{contracting in expectation} if for a sequence of metrics $\mathbf{M}_{0}, \dots, \mathbf{M}_{t}$ we have:

\[ \mathbb{E}_\mathcal{A}[d_{\mathcal{M}_{t+1}}(\f(\x_t,t,\Gamma),\f(\y_t,t,\Gamma))] \leq \mu\mathbb{E}_\mathcal{A}[d_{\mathcal{M}_t}(\x_t,\y_t)] \]
where $0 < \mu < 1$ and each metric is bounded $M_{min}\mathbf{I} \preceq \mathbf{M}_i \preceq M_{max}\mathbf{I}$. We can now state the following theorem:

\begin{theorem}\label{theorem:Contraction_Implies_AS_discrete}[Contraction Implies Algorithmic Stability (Stochastic, Discrete)]
Assume \eqref{eq:discrete_update} is contracting in expectation, as defined above. In this case $\mathcal{A}$ is uniformly $\epsilon$-stable with bound: 

\begin{equation}\label{eq:robustness_gen_result_discrete}
\epsilon \leq  L \chi C \mu^t + \frac{2\chi L \xi}{(1-\mu) n}    
\end{equation}
\end{theorem}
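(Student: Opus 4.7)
The plan is to mirror the proof of Theorem \ref{theorem:Contraction_Implies_AS} as closely as possible, substituting the discrete robustness estimate \eqref{eq:discrete_contraction_robustness_equation} for the continuous one and being careful about how expectations interact with the stochastic mini-batch sampling. The central idea is again to couple the two trajectories so that \eqref{eq:discrete_update} run on $S'$ can be written as \eqref{eq:discrete_update} run on $S$, plus a small, bounded perturbation.

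The first step is the coupling: at each iteration I would use the \emph{same} random index set $\Gamma_t$ to form the mini-batch for both $\boldsymbol{\theta}^{S}_t$ and $\boldsymbol{\theta}^{S'}_t$ (this is the $\Gamma$ in the contraction-in-expectation definition). Writing the $S'$ update as
\[
\boldsymbol{\theta}^{S'}_{t+1} \;=\; \frac{1}{b}\sum_{i \in \Gamma_t} \mathbf{g}(\boldsymbol{\theta}^{S'}_t, z_i) \;+\; \mathbf{d}_t,
\qquad
\mathbf{d}_t \;=\; \frac{1}{b}\sum_{i \in \Gamma_t}\bigl(\mathbf{g}(\boldsymbol{\theta}^{S'}_t, z'_i) - \mathbf{g}(\boldsymbol{\theta}^{S'}_t, z_i)\bigr),
\]
exhibits the $S'$ dynamics as a disturbed version of the $S$ dynamics evaluated at $\boldsymbol{\theta}^{S'}_t$. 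Since $S$ and $S'$ differ only in the single index $k$, $\mathbf{d}_t$ vanishes unless $k \in \Gamma_t$, in which case $\|\mathbf{d}_t\| \leq 2\xi/b$ by the assumption $\|\mathbf{g}\| \leq \xi$. Because $k$ lies in $\Gamma_t$ with probability $b/n$, the expected disturbance magnitude is $\mathbb{E}_{\mathcal{A}}\|\mathbf{d}_t\| \leq (b/n)(2\xi/b) = 2\xi/n$, which plays the role of $\distBound$ in \eqref{eq:discrete_contraction_robustness_equation}.

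The second step is to iterate the contraction-in-expectation inequality together with the perturbation bound, in direct analogy with the derivation of \eqref{eq:discrete_contraction_robustness_equation}. Combining one step of contraction with the perturbation yields
\[
\mathbb{E}_{\mathcal{A}}\bigl[d_{\mathcal{M}_{t+1}}(\boldsymbol{\theta}^{S}_{t+1},\boldsymbol{\theta}^{S'}_{t+1})\bigr]
\;\leq\; \mu\,\mathbb{E}_{\mathcal{A}}\bigl[d_{\mathcal{M}_t}(\boldsymbol{\theta}^{S}_{t},\boldsymbol{\theta}^{S'}_{t})\bigr] \;+\; \sqrt{M_{\max}}\,\mathbb{E}_{\mathcal{A}}\|\mathbf{d}_t\|,
\]
and unrolling this recursion gives a geometric series whose partial sums are bounded by $\frac{2\xi\sqrt{M_{\max}}}{(1-\mu)\,n}$. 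Converting geodesic distance back to Euclidean distance using $\sqrt{M_{\min}}\|\x-\y\|_2 \leq d_{\mathcal{M}}(\x,\y) \leq \sqrt{M_{\max}}\|\x-\y\|_2$ introduces the condition-number factor $\chi$. Using the initialization assumption $\mathbb{E}_{\mathcal{A}}\|\boldsymbol{\theta}^{S}_0-\boldsymbol{\theta}^{S'}_0\| \leq C$ for the initial term and applying the Lipschitz property \eqref{eq:lip_ass} produces exactly \eqref{eq:robustness_gen_result_discrete}.

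The main obstacle is bookkeeping around the expectations: the contraction hypothesis is only assumed in expectation over $\Gamma$, not path-wise, so I have to justify that the standard linear-recursion-plus-perturbation argument still closes when each inequality is understood in expectation. The clean way to handle this is to take conditional expectations given the filtration generated by $(\Gamma_0,\ldots,\Gamma_{t-1})$ and use the tower property, so that the contraction bound and the disturbance bound can be combined into a single one-step inequality for $\mathbb{E}_{\mathcal{A}}[d_{\mathcal{M}_t}(\cdot,\cdot)]$ before unrolling. All other elements are essentially identical to the proof of Theorem \ref{theorem:Contraction_Implies_AS}, with \eqref{eq:discrete_contraction_robustness_equation} replacing \eqref{eq:expected_robustness_IC}.
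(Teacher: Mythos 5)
Your proof is correct and matches the paper's argument in all essentials: couple the two runs by sharing the mini-batch index sequence, express the $S'$ update as a perturbed $S$ update, bound the perturbation, iterate the contraction-in-expectation inequality, and convert geodesic to Euclidean distance before multiplying by $L$. The one cosmetic difference is that the paper conditions explicitly on whether the replaced index lands in the mini-batch (events $A$ and $B$) and reassembles by the law of total expectation, whereas you fold this into a single disturbance $\mathbf{d}_t$ that vanishes when $k \notin \Gamma_t$ and compute $\mathbb{E}_{\mathcal{A}}\|\mathbf{d}_t\| \leq (b/n)(2\xi/b) = 2\xi/n$ directly---the same calculation with slightly less bookkeeping, and your remark about taking conditional expectations with respect to the filtration $(\Gamma_0,\ldots,\Gamma_{t-1})$ is a cleaner way to make the step-by-step use of the contraction-in-expectation hypothesis rigorous than the paper's direct conditioning on $A$ and $B$.
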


\begin{proof}
 For every time $t$ we randomly sample $b$ indices ${i_1,\cdots,i_b}$. Using these indices we select datapoints ${z_i, \cdots,z_{i_b}}$ from $S$ and $S'$ to update $\boldsymbol{\theta}_t^S$ and $\boldsymbol{\theta}_t^{S'}$ respectively. At every time $t$ there are two possibilities. Either we do not draw the replaced element $z'$ or we do. Denote these events $A$ and $B$, respectively (Figure \ref{fig:mini-batches}). We have $P(A) = 1-\frac{b}{n}$ and $P(B) = \frac{b}{n}$. If  event $A$ occurs, then by assumption we expect the geodesic distance to shrink:
\begin{equation}\label{eq:no_draw_replaced_stoch}
 \mathbb{E}_\mathcal{A}[d_{\mathcal{M}_{t+1}}(\boldsymbol{\theta}^{S}_{t+1},\boldsymbol{\theta}^{S'}_{t+1}) | A] \leq \mu\mathbb{E}_\mathcal{A}[d_{\mathcal{M}_t}(\boldsymbol{\theta}^{S}_{t},\boldsymbol{\theta}^{S'}_{t}) | A]     
\end{equation}

where $\mathbb{E}[\cdot|A]$ denotes the conditional expectation given event $A$.
However, if the replaced element is drawn (i.e., event $B$ occurs) then we have:

\[\boldsymbol{\theta}^{S}_{t+1} =  \frac{1}{b} \sum_{i = 1}^b \mathbf{g}(\boldsymbol{\theta}^{S}_t,z_i) \equiv \hat{\mathbf{G}}(\boldsymbol{\theta}_t^{S}) \]

\[\boldsymbol{\theta}^{S'}_{t+1} =  \hat{\mathbf{G}}(\boldsymbol{\theta}_t^{S'}) + \mathbf{d}(\boldsymbol{\theta}_t^{S'}) \]
where $\mathbf{d}(\boldsymbol{\theta}_t^{S'}) = \frac{1}{b}(\mathbf{g}(\boldsymbol{\theta}_t^{S'},z'_i)-\mathbf{g}(\boldsymbol{\theta}_t^{S'},z_i))$. As in Theorem \eqref{theorem:Contraction_Implies_AS}, we have written the update for $\boldsymbol{\theta}^{S'}$ as a `perturbed' version of the update for $\boldsymbol{\theta}^{S}$. We will now derive an analogous robustness result, and then use the linearity of expectation to bound the overall geodesic distance. Note that:
\begin{align*}
    d_{\mathcal{M}_{t+1}}(\boldsymbol{\theta}^{S}_{t+1},\boldsymbol{\theta}^{S'}_{t+1}) &= d_{\mathcal{M}_{t+1}}(\hat{\mathbf{G}}(\boldsymbol{\theta}^{S}_{t}),\hat{\mathbf{G}}(\boldsymbol{\theta}^{S'}_{t}) + \mathbf{d}(\boldsymbol{\theta}_t^{S'})) \\
    &\leq d_{\mathcal{M}_{t+1}}(\hat{\mathbf{G}}(\boldsymbol{\theta}^{S}_{t}),\hat{\mathbf{G}}(\boldsymbol{\theta}^{S'}_{t})) + d_{\mathcal{M}_{t+1}}(\hat{\mathbf{G}}(\boldsymbol{\theta}^{S'}_{t}),\hat{\mathbf{G}}(\boldsymbol{\theta}^{S'}_{t}) + \mathbf{d}(\boldsymbol{\theta}_t^{S'})) \\
    &\leq d_{\mathcal{M}_{t+1}}(\hat{\mathbf{G}}(\boldsymbol{\theta}^{S}_{t}),\hat{\mathbf{G}}(\boldsymbol{\theta}^{S'}_{t})) + \sqrt{M_{max}}\frac{2\xi}{b}    
\end{align*}
where the first inequality comes from the triangle inequality and the second comes from the boundedness of $\mathbf{d}(\boldsymbol{\theta}_t^{S'})$ and the metric distortion bound in Theorem \ref{theorem:bounded_distortion}. Now applying the assumption of contraction in expectation we get:

\begin{equation}\label{eq:draw_replaced_stoch}
\begin{split}
    \mathbb{E}_{\mathcal{A}}[d_{\mathcal{M}_{t+1}}(\boldsymbol{\theta}^{S}_{t+1},\boldsymbol{\theta}^{S'}_{t+1})|B] \leq \mathbb{E}_{\mathcal{A}}[d_{\mathcal{M}_{t+1}}(\hat{\mathbf{G}}(\boldsymbol{\theta}^{S}_{t}),\hat{\mathbf{G}}(\boldsymbol{\theta}^{S'}_{t}))|B] + \sqrt{M_{max}}\frac{2\xi}{b} \\
    \leq \mu \mathbb{E}_{\mathcal{A}}[d_{\mathcal{M}_{t}}(\boldsymbol{\theta}^{S}_{t},\boldsymbol{\theta}^{S'}_{t})|B] + \sqrt{M_{max}}\frac{2\xi}{b} 
\end{split}    
\end{equation}
We can now use the linearity of the expectation operator to bound the geodesic distance, and then use the metric distortion result to bound the Euclidean distance:
\begin{align*}
\mathbb{E}_{\mathcal{A}}[d_{\mathcal{M}_{t+1}}(\boldsymbol{\theta}^{S}_{t+1},\boldsymbol{\theta}^{S'}_{t+1})] = \mathbb{E}_{\mathcal{A}}[d_{\mathcal{M}_{t+1}}(\boldsymbol{\theta}^{S}_{t+1},\boldsymbol{\theta}^{S'}_{t+1})|A]P(A) + \mathbb{E}_{\mathcal{A}}[d_{\mathcal{M}_{t+1}}(\boldsymbol{\theta}^{S}_{t+1},\boldsymbol{\theta}^{S'}_{t+1})|B]P(B) \\
\leq \mu\mathbb{E}_\mathcal{A}[d_{\mathcal{M}_t}(\boldsymbol{\theta}^{S}_{t},\boldsymbol{\theta}^{S'}_{t})](1-\frac{b}{n}) + (\mu \mathbb{E}_{\mathcal{A}}[d_{\mathcal{M}_{t}}(\boldsymbol{\theta}^{S}_{t},\boldsymbol{\theta}^{S'}_{t})] + \sqrt{M_{max}}\frac{2\xi}{b})\frac{b}{n} \\
= \mu\mathbb{E}_\mathcal{A}[d_{\mathcal{M}_t}(\boldsymbol{\theta}^{S}_{t},\boldsymbol{\theta}^{S'}_{t})] + \sqrt{M_{max}}\frac{2\xi}{n}
\end{align*}
Using the metric distortion bounds and unravelling the recursion yields:

\[ \mathbb{E}_{\mathcal{A}}[d(\boldsymbol{\theta}^{S}_{t},\boldsymbol{\theta}^{S'}_{t})] \leq \chi \mu^tC + \frac{2 \chi \xi}{(1-\mu)n } \]
Where $\chi = \sqrt{\frac{M_{max}}{M_{min}}}$ has again come from the metric distortion bound. Multiplying through by $L$, we have that:
\[\epsilon_{stab} = \frac{2L\chi \xi}{n(1-\mu)} \]
which is the same result as the continuous-time case, expect that $\lambda \rightarrow (1-\mu)$. 
\end{proof}

\begin{figure}[ht]
\centering
\includegraphics[width = \textwidth]{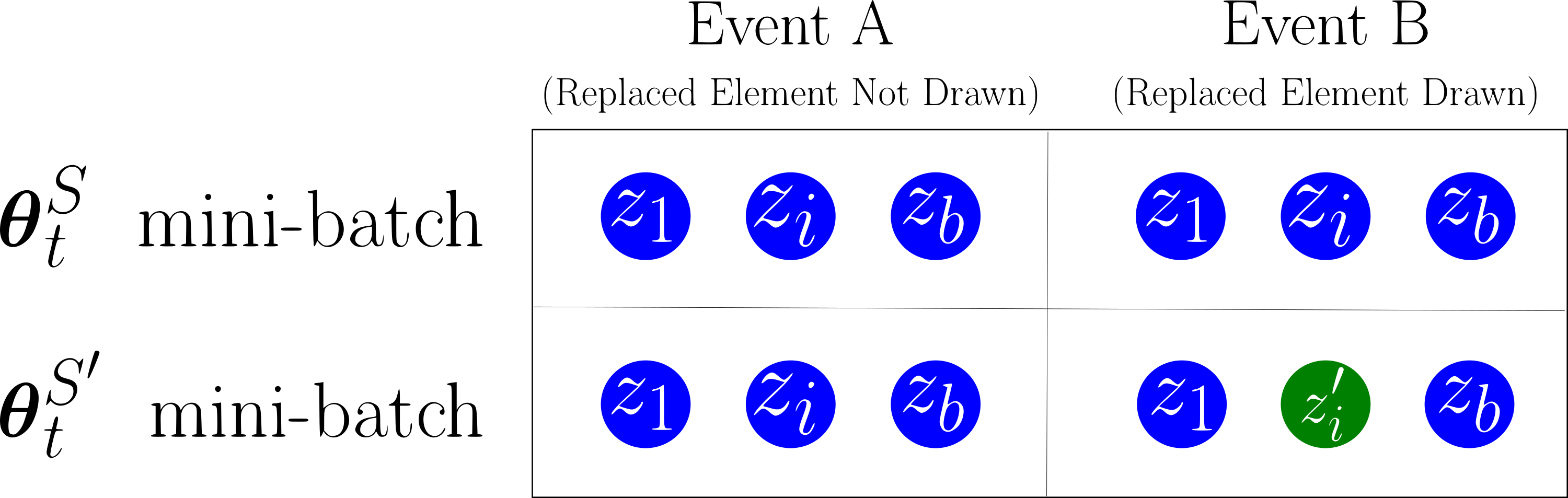}
\caption{Illustration of the two cases for updating with mini-batches. At each time $t$, we randomly sample $b$ indices between $1$ and $n$. Then we draw the corresponding data-points from sets $S$ and $S'$ to form the mini-batches used to update $\boldsymbol{\theta}^S_t$ and $\boldsymbol{\theta}^{S'}_t$ respectively. In Event A (left column), the index of the replaced element is not selected, and therefore the datapoints used to update $\boldsymbol{\theta}^S_t$ and $\boldsymbol{\theta}^{S'}_t$ are the same. In Event B, the index of the replaced element is selected, and so the datapoints used to perform the update are different. }
\label{fig:mini-batches}
\end{figure}

\section{Examples}
\subsection{Preconditioned Gradient Descent On Strongly Convex Loss Functions} 
\label{subsection:GD_strongly_convex}
In this example we show that our theory reproduces known stability bounds for gradient descent on strongly convex losses. To illustrate the role of the contraction metric, we consider \textit{preconditioned} gradient descent. Consider this descent over an empirical loss function which is $\gamma$-strongly convex with respect to a parameter vector $\boldsymbol{\theta}$: 

\[\thetaDot = -\mathbf{P}^{-1}\nabla \mathcal{L} \]
where $\mathbf{P}$ is a positive-definite and symmetric matrix. Denote the largest and smallest eigenvalues of $\mathbf{P}$ as $p_{max}$ and $p_{min}$, respectively. The Jacobian of this system is:

\[\mathbf{J} = \frac{\partial \thetaDot}{\partial \boldsymbol{\theta}} = -\mathbf{P}^{-1}\nabla^2 \mathcal{L}\]
Picking the metric $\mathbf{M} = \mathbf{P}$, we see that:

\[\mathbf{P}\mathbf{J} + \mathbf{J}^T\mathbf{P} = -2\nabla^2 \mathcal{L} \leq -2\gamma \mathbf{I} \leq -\frac{2\gamma}{p_{max}}\mathbf{P} \]
and thus the system is contracting in metric $\mathbf{P}$ with rate $\lambda = \gamma/p_{max}$. Our algorithmic stability bound is therefore:

\[\epsilon_{stab} =  \sqrt{\frac{p_{max}^3}{p_{min}}} \frac{2L^2}{\gamma n} \]
Where $L$ is given by \eqref{remark: lip_ass_caveat}. Note that in the case of regular gradient descent, without preconditioning (i.e., $\mathbf{P} = \mathbf{I}$) the above analysis shows that $\lambda = \gamma$ and $\chi = 1$. Plugging these numbers into equation \eqref{eq:robustness_gen_result} yields the following:

\[\epsilon_{stab} =  \frac{2 L^2}{\gamma n} \]
which is precisely the result of Theorem 3.9 in \citep{hardt2016train}.

\begin{remark}[Natural Gradient on Geodesically Strongly Convex Losses]\label{remark:natural-gradient-remark}
Natural gradients are a popular way to incorporate geometric information about the loss surface into gradient-based optimization techniques \citep{amari1998natural,zhang2019fast}.
An equivalence between g-Strong Convexity and global contraction of natural gradient flows was is given in (Theorem 1, \citep{wensing2020beyond}). That is, the optimizer dynamics:

\[\thetaDot = - \mathbf{M}(\boldsymbol{\theta})^{-1}\nabla \mathcal{L}(\boldsymbol{\theta}) \]
are globally contracting if and only if $\mathcal{L}(\boldsymbol{\theta})$ is geodesically strongly convex over $\mathcal{M}$. In this case Theorem \ref{theorem:Contraction_Implies_AS} of the present work applies immediately, in precisely the same fashion as the preceding subsection.
\end{remark}

\subsection{Picking the Best Metric for Kernel Regression}
For constant metrics, our stability bound $\epsilon_{stab} \sim \frac{\chi}{\lambda}$ depends on the condition number of the contraction metric (specifically its square root) to the contraction rate \textit{measured in that metric}. Different metrics yield different $\epsilon_{stab}$, so it is natural to ask whether an `optimal' metric $\mathbf{M}_{optimal}$ exists, such that:

\[\epsilon_{stab}(\mathbf{M}_{optimal}) \leq \epsilon_{stab}(\mathbf{M}) \]
While finding such a metric is in general not easy to do, we show that it is possible in the case of gradient descent for kernel ridge regression \citep{shawe2004kernel}. Kernel methods (which are inherently linear) can be used to derive insights into nonlinear systems such deep neural networks \citep{jacot2018neural,lee2019wide,fort2020deep,canatar2021spectral}. Without loss of generality, we assume an element-wise feature map such that for a matrix $\mathbf{X} \in \mathbb{R}^{q \times z}$, the matrix $\phi(\mathbf{X})\in \mathbb{R}^{q \times  z}$ satisfies $\phi(\mathbf{X})_{ij} = \phi(\mathbf{X}_{ij})$. The squared-loss for kernel ridge-regression is:

\[\mathcal{L} = \frac{1}{2n}\sum_{i = 1}^n (\phi(\mathbf{x}_i)\mathbf{w} - y_i)^2  + \frac{\alpha}{2} ||\mathbf{w}||^2\]
where the $\phi(\mathbf{x}_i)$ are feature row vectors, $\mathbf{w} \in \mathbb{R}^m$ is the linear model to be learned, and the $y_i \in \mathbb{R}$ are target labels. The parameter $\alpha> 0$ is the regularization parameter. Under gradient descent $\dot{\mathbf{w}} = -\nabla\mathcal{L}$ the Jacobian of the optimizer dynamics is:

\[\frac{\partial \dot{\mathbf{w}} }{\partial \mathbf{w}} = \mathbf{J} = -(\mathbf{G} + \alpha\mathbf{I}) \]
where $\mathbf{G} \equiv \frac{1}{n}\phi(\mathbf{X})^T\phi(\mathbf{X})$ and $\mathbf{X} \in \mathbb{R}^{n \times d}$ is a constant matrix with $\x_i$ as the $i^{th}$ row. The Jacobian $\mathbf{J}$ is symmetric, constant and negative-definite. Thus, the optimizer is contracting in the identity metric with rate $\lambda_I = \lambda_{min}(\mathbf{G}) + \alpha$, where $\lambda_{min}(\cdot)$ denotes the smallest eigenvalue. We now prove the following:

\begin{theorem}
For kernel ridge regression, the algorithmic stability bound $\epsilon_{stab}$ is minimized for $\mathbf{M} = \mathbf{I}$.
\end{theorem}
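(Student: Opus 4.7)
The plan is to prove $\epsilon_{stab}(\mathbf{M}) \geq \epsilon_{stab}(\mathbf{I})$ for every constant symmetric positive-definite metric $\mathbf{M}$. Since $\epsilon_{stab}(\mathbf{M}) = \tfrac{2L\xi\chi(\mathbf{M})}{n\lambda(\mathbf{M})}$, it suffices to show $\chi(\mathbf{M})/\lambda(\mathbf{M}) \geq 1/\lambda_{min}(\mathbf{K})$, where I write $\mathbf{K} \equiv \mathbf{G}+\alpha\mathbf{I}$ so that $\mathbf{J} = -\mathbf{K}$ is symmetric and negative-definite. I would split this into two independent bounds, $\chi(\mathbf{M}) \geq 1$ and $\lambda(\mathbf{M}) \leq \lambda_{min}(\mathbf{K})$, each saturated by $\mathbf{M}=\mathbf{I}$.

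The condition-number bound is immediate: writing $\mathbf{M} = \mathbf{T}^T\mathbf{T}$ gives $\chi(\mathbf{M}) = \sigma_{max}(\mathbf{T})/\sigma_{min}(\mathbf{T}) = \sqrt{\lambda_{max}(\mathbf{M})/\lambda_{min}(\mathbf{M})} \geq 1$, with equality if and only if $\mathbf{M}$ is a scalar multiple of the identity.

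For the rate bound, the contraction inequality $\mathbf{M}\mathbf{J}+\mathbf{J}^T\mathbf{M} \leq -2\lambda\mathbf{M}$ becomes $\mathbf{M}\mathbf{K}+\mathbf{K}\mathbf{M} \geq 2\lambda\mathbf{M}$ after substituting $\mathbf{J}=-\mathbf{K}$ and using the symmetry of $\mathbf{K}$. The key step is to test this matrix inequality against the unit eigenvector $\mathbf{v}$ of $\mathbf{K}$ corresponding to $\lambda_{min}(\mathbf{K})$. Direct computation gives $\mathbf{v}^T\mathbf{M}\mathbf{K}\mathbf{v} = \mathbf{v}^T\mathbf{K}\mathbf{M}\mathbf{v} = \lambda_{min}(\mathbf{K})\,\mathbf{v}^T\mathbf{M}\mathbf{v}$, so the inequality reduces to $2\lambda_{min}(\mathbf{K})\,\mathbf{v}^T\mathbf{M}\mathbf{v} \geq 2\lambda\,\mathbf{v}^T\mathbf{M}\mathbf{v}$; since $\mathbf{v}^T\mathbf{M}\mathbf{v} > 0$, this forces $\lambda \leq \lambda_{min}(\mathbf{K})$. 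Hence the optimal contraction rate in any constant metric is bounded by $\lambda_I \equiv \lambda_{min}(\mathbf{K}) = \lambda_{min}(\mathbf{G})+\alpha$, which is precisely the rate attained by $\mathbf{M}=\mathbf{I}$.

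Combining the two bounds yields $\epsilon_{stab}(\mathbf{M}) \geq \tfrac{2L\xi}{n\lambda_I} = \epsilon_{stab}(\mathbf{I})$. The main conceptual step is the choice of test vector: because $\mathbf{K}$ is symmetric, plugging its own eigenvector into the quadratic form symmetrizes $\mathbf{M}\mathbf{K}+\mathbf{K}\mathbf{M}$ in just the right way, and the factor $\mathbf{v}^T\mathbf{M}\mathbf{v}$ cancels on both sides. Beyond that observation, every step is a direct substitution, so I do not anticipate any serious obstacle.
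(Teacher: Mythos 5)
Your proof is correct, and the route is genuinely different from the paper's in the way the rate bound is established. The paper appeals to a classical fact from linear systems theory: that the largest attainable contraction rate for a linear system is achieved by the metric obtained from solving the Lyapunov equation with $\mathbf{Q}=\mathbf{I}$; it then computes that this particular $\mathbf{M}_{largest} = \tfrac{1}{2}\mathbf{K}^{-1}$ gives the same rate $\lambda_{min}(\mathbf{G})+\alpha$ as the identity metric, and concludes by combining with $\chi_I = 1 \le \chi_M$. You instead prove the rate bound $\lambda(\mathbf{M}) \le \lambda_{min}(\mathbf{K})$ directly, from scratch, by testing the contraction inequality $\mathbf{M}\mathbf{K}+\mathbf{K}\mathbf{M} \succeq 2\lambda\mathbf{M}$ against the minimal eigenvector of $\mathbf{K}$; since $\mathbf{K}$ is symmetric, the cross terms collapse to $\lambda_{min}(\mathbf{K})\,\mathbf{v}^T\mathbf{M}\mathbf{v}$ on the left and the $\mathbf{v}^T\mathbf{M}\mathbf{v}>0$ factor cancels. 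This is cleaner and more self-contained: it avoids the external citation, it makes explicit that the argument hinges on the symmetry of $\mathbf{J}$, and it proves the rate bound uniformly over all constant metrics in one line rather than by exhibiting an alleged maximizer and verifying it ties the identity. What the paper's approach buys in exchange is a constructive description of the entire family of rate-optimal metrics (anything solving the Lyapunov equation with $\mathbf{Q}=\mathbf{I}$), which motivates its Figure \ref{fig:opt_rate} experiment; your argument is an inequality proof that gives the bound but not the characterization. Both then finish identically by combining $\chi \ge 1$ with $\lambda \le \lambda_I$.
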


\begin{proof}
Recall that for an arbitrary, constant metric we are looking for a positive-definite symmetric $\mathbf{Q}$ such that:
\[  \mathbf{M}\mathbf{J} + \mathbf{J}\mathbf{M} = -\mathbf{Q} \leq -2\lambda \mathbf{M} \]
Ignoring $\chi$ for a moment, we can ask: out of the set of all possible metrics, is there a metric that yields the \textit{largest} contraction rate $\lambda$?  An interesting result from linear dynamical systems theory is that the answer is in fact yes. While there can be many metrics for linear systems that give the largest possible $\lambda$, one can always be found from setting $\mathbf{Q}=\mathbf{I}$ and solving for $\mathbf{M}$ (see, e.g., section 3.5.5 in \citep{slotine1991applied}). Since $\mathbf{J}$ is symmetric, in our case this metric corresponds to the diagonalizing metric:

\[\mathbf{M}_{largest} = \frac{1}{2}\mathbf{J}^{-1} =  \frac{1}{2}(\mathbf{G + \alpha\mathbf{I}})^{-1} \]
The contraction rate $\lambda_{largest}$ corresponding to this metric is:
\[\lambda_{largest} = \frac{1}{2}\frac{1}{\lambda_{max}(\mathbf{M}_{largest})} = \lambda_{min}(\mathbf{G}) + \alpha \]
which is precisely the same contraction rate as measured in the identity metric. Thus $\lambda_I = \lambda_{largest}$. Now we simply use the fact that $\chi_I = 1 \leq \chi_M$ for any metric. Since $\mathbf{M} = \mathbf{I}$ corresponds to the largest possible $\lambda$ and the smallest possible $\chi = 1$, the ratio of $\chi$ to $\lambda$ is minimal over all possible $\mathbf{M}$ when $\mathbf{M} = \mathbf{I}$. Thus:

\[\epsilon_{stab}(\mathbf{I}) \leq \epsilon_{stab}(\mathbf{M}) \]
\end{proof}
This result is illustrated in Figure \ref{fig:opt_rate}. To create this plot we generated a random $\mathbf{G} \in \mathbb{R}^{3 \times 3}$. Then we generated random $\mathbf{Q}$ and solved the Lyapunov equation for $\mathbf{M}$ using an implementation of the Bartels-Stewart algorithm in SciPy \citep{10.1145/361573.361582,2020SciPy-NMeth}. In addition to these random $\mathbf{Q}$, we also set $\mathbf{Q} = \mathbf{I}$ to obtain the $\mathbf{M}$ corresponding to the largest $\lambda$. For each of these $\mathbf{Q}$ and $\mathbf{M}$ pairs, $\lambda_M$ is given by $\lambda_M = \frac{1}{2}\frac{\lambda_{min}(\mathbf{Q})}{\lambda_{max}(\mathbf{M})}$ \citep{slotine1991applied}. We computed $\mathbf{T}$ via a Cholesky decomposition (also using SciPy) and then performed a singular value decomposition to obtain $\chi$.  One interpretation of this result is: there is no `better' coordinate system. That is, there is no coordinate transformation we could perform on the state vector $\mathbf{w}$ which would give us a tighter algorithmic stability bounds. This is because a constant metric $\mathbf{M} = \mathbf{T}^T\mathbf{T}$ corresponds to the coordinate change $\mathbf{w} \rightarrow \mathbf{T}\mathbf{w}$. 
\begin{figure}[ht]
\centering
\includegraphics[width = \textwidth]{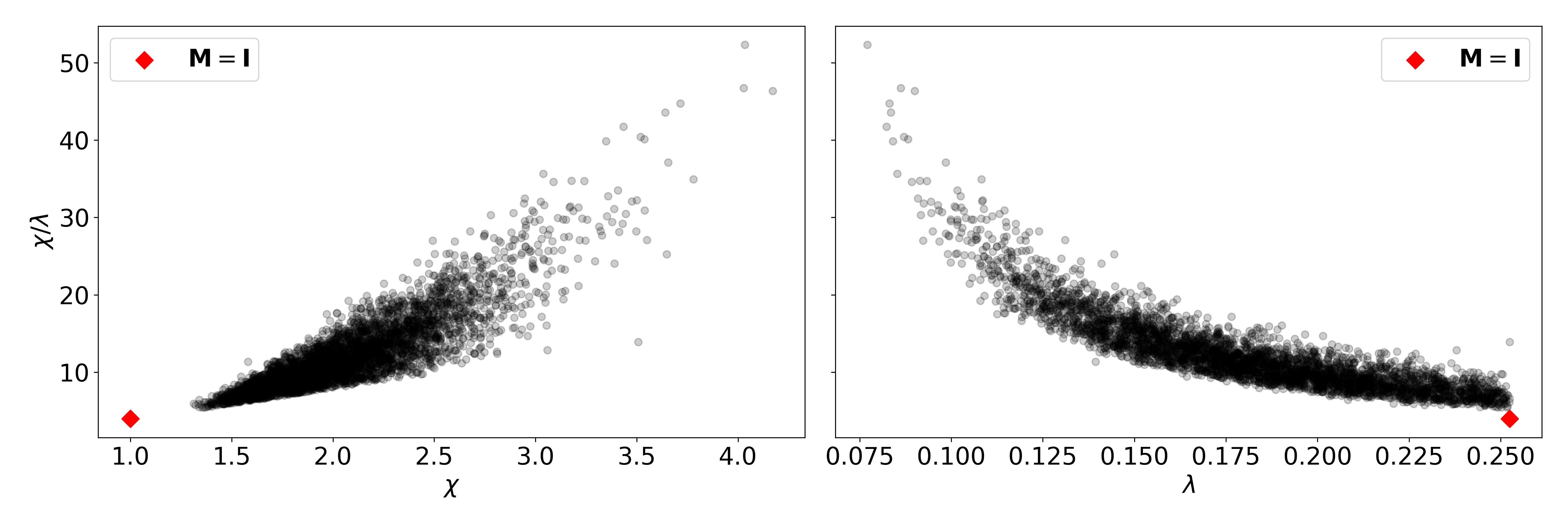}
\caption{For a fixed $\mathbf{G} \in \mathbb{R}^{3 \times 3}$, we randomly generate $\mathbf{Q}$ and solve for $\mathbf{M}$. We then calculate the ratio $\chi/\lambda_M$ (the only metric-dependent terms in our algorithmic stability bound). We repeat this procedure 4000 times. \textit{Left subplot)} The ratio $\chi/\lambda_M$ plotted against $\chi$. \textit{Right subplot)} The ratio $\chi/\lambda_M$ bound plotted against $\lambda_M$. Details are in main text.
These plots illustrate our theoretical result: that the identity metric gives the optimal (i.e smallest) algorithmic stability bound. They also illustrates the reason: the identity metric simultaneously obtains the smallest condition number and the largest contraction rate, thus minimizing the ratio of the former to the latter.} 
\label{fig:opt_rate}
\end{figure}

\section{Weaker Notions of Stability}
Contraction imposes a strong condition on optimizer trajectories: they must converge towards one another exponentially. Such convergence can be expected around isolated local or global minima, as discussed above. However in modern machine learning, one often observes optimizer trajectories which converge towards a common basin of low/zero loss, where minima may lie among a low-dimensional manifold \citep{garipov2018loss,draxler2018essentially,fort2020deep,liu2021loss}. To accommodate these cases, we now discuss several weaker notions of contraction--specifically local contraction, semi-contraction and partial contraction--which also yield `well-behaved' algorithmic stability bounds. 

\subsection{Loss Surfaces with Many Local Minima}
A contraction region (i.e., a region of state space that satisfies Definition \ref{definition: contracting_system}) for an autonomous system contains at most one equilibrium point \citep{lohmiller1998contraction}. From this it follows that gradient descent over a loss surface with many local equilibria cannot be globally contracting. Fortunately, if Definition \ref{definition: contracting_system} holds within a subset of state-space, and additionally the system can be show to remain in that subset for all time (i.e., the subset is forward invariant), then that system is locally contracting. This motivates the following general result, as well as optimization-specific remark. 

\begin{theorem}\label{theorem:local_robustness}
Consider the system \eqref{eq:dynamical_system} initialized inside an inner Euclidean ball of radius $b$, which is fully contained within a outer contraction region (which we also assume without loss of generality to be a Euclidean ball) of radius $B > b$. Assume that \eqref{eq:dynamical_system} stays within the inner ball for all time. Now consider the perturbed dynamics \eqref{eq:perturbed_dynamics}. If $B \ge b(\chi + 1) + \frac{\chi \distBound}{\lambda}$ then \eqref{eq:perturbed_dynamics} stays within the outer contraction region for all time, and the robustness result \eqref{eq:contraction_robustness_equation} holds. 
\end{theorem}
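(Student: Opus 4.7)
The plan is to apply the robustness inequality \eqref{eq:contraction_robustness_equation} globally in time, but this requires that the perturbed trajectory $\x_p$ remain inside the outer contraction region for all $t$---which is itself the content of the theorem. I would handle this apparent circularity with a standard forward-invariance (bootstrapping) argument: introduce a first-exit time for $\x_p$ from the outer ball, apply the contraction robustness estimate only up to that time, and invoke the margin built into the hypothesis on $B$ to conclude that the exit time must be infinite.

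Concretely, fix (WLOG) a common center $c$ of the two concentric balls and define the stopping time $T^\star = \inf\{t \geq 0 : \|\x_p(t) - c\| > B\}$, with $T^\star = \infty$ if the set is empty. Since $\x_p(0)$ lies strictly inside the outer ball and $\x_p$ is continuous, $T^\star > 0$. On $[0,T^\star)$ both trajectories live in the contraction region---$\x(t)$ by hypothesis (it stays in the inner ball, which is contained in the outer one) and $\x_p(t)$ by the very definition of $T^\star$---so the differential inequality \eqref{eq:contraction_robustness_ineq} integrates on this interval to give
\[ R(t) \;\leq\; \chi R(0)\, e^{-\lambda t} + \frac{\chi \distBound}{\lambda} \;\leq\; \chi b + \frac{\chi \distBound}{\lambda}, \]
taking the natural reading that the initial separation satisfies $R(0) \leq b$. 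Combining with the triangle inequality and the hypothesis $\|\x(t)-c\|\leq b$,
\[ \|\x_p(t) - c\| \;\leq\; \|\x(t)-c\| + R(t) \;\leq\; b + \chi b + \frac{\chi \distBound}{\lambda} \;=\; b(\chi+1) + \frac{\chi \distBound}{\lambda} \;\leq\; B \]
for every $t \in [0, T^\star)$, with the last inequality being exactly the hypothesis. If $T^\star$ were finite, continuity would force $\|\x_p(T^\star)-c\| = B$, while the bound above keeps $\x_p$ inside the outer ball on the open interval; a routine limiting argument (e.g., applying the same estimate with any $B' > B$ and letting $B' \downarrow B$) rules out a first exit. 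Hence $T^\star = \infty$, and \eqref{eq:contraction_robustness_equation} holds for all $t\geq 0$.

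The only nontrivial step is the bootstrapping itself, which cleanly resolves the would-be circularity; the rest is triangle-inequality bookkeeping and a direct invocation of \eqref{eq:contraction_robustness_equation}. The hypothesis $B \geq b(\chi+1) + \chi \distBound/\lambda$ is precisely tuned so that the outer ball can absorb both the transient $\chi b$ from the initial separation (amplified at worst by the metric condition number) and the steady-state offset $\chi \distBound/\lambda$ induced by the disturbance; if this margin failed, the escape-time argument would break and $\x_p$ could in principle leave the contraction region before the contracting vector field has a chance to pull it back.
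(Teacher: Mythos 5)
Your proof is correct and follows the same essential route as the paper's---apply the contraction robustness estimate \eqref{eq:contraction_robustness_equation} to get $R(t) \leq \chi b + \chi\distBound/\lambda$ (reading $R(0)\leq b$, as the paper also implicitly does), then use the triangle inequality with $\|\x(t)-c\|\leq b$ to conclude the perturbed trajectory stays inside the outer ball when $B \geq b(\chi+1)+\chi\distBound/\lambda$. The one genuine addition you make is formalizing the ``assuming it stays in a contraction region'' step: the paper states the circularity and its resolution informally, whereas you resolve it with a first-exit-time bootstrapping argument, which is the correct and standard way to make the argument airtight. Your closing remark about the boundary case (ruling out a finite first exit when the estimate gives only $\leq B$ rather than $<B$) is a minor loose end; the cleanest fix is to observe that integrating \eqref{eq:contraction_robustness_ineq} actually yields the strict bound $R(t) \leq \chi R(0)e^{-\lambda t} + \frac{\chi\distBound}{\lambda}(1-e^{-\lambda t}) < \chi b + \frac{\chi\distBound}{\lambda}$ for $t>0$ (assuming $b,\distBound>0$), which directly forbids $\|\x_p(T^\star)-c\|=B$ at any positive $T^\star$---no limiting argument in $B'$ is needed. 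Overall this is a more rigorous version of the paper's proof rather than a different one.
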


\begin{proof}
By \eqref{eq:contraction_robustness_equation}, the perturbed trajectory will be at most distance $\chi b + \frac{\chi \distBound}{\lambda}$ from the unperturbed trajectory. Since the unperturbed trajectory is always contained within a ball of radius $b$, this implies the perturbed trajectory is also contained in a ball of radius $b + \chi b + \frac{\chi \distBound}{\lambda}$, assuming it stays in a contraction region. To ensure that it does in fact stay within a contraction region, we must have that $B \ge b(\chi + 1) + \frac{\chi \distBound}{\lambda}$.
\end{proof}

\begin{remark}
In the case of continuous-time optimizer \eqref{eq:learning_dynamics_contraction_S}, we have that $\boldsymbol{\theta}_S$ converges exponentially to the equilibrium point $\boldsymbol{\theta}_S^*$ enclosed by the contraction region:

\[||\boldsymbol{\theta}^* - \boldsymbol{\theta}_{S}|| \leq \chi e^{-\lambda t}b\]
Since $\boldsymbol{\theta}_S^*$ is a particular trajectory of the optimizer dynamics, by robustness we also have:
\[||\boldsymbol{\theta}_S^* - \boldsymbol{\theta}_{S'}|| \leq \chi e^{-\lambda t}b + \frac{2\chi \xi }{\lambda n} \]
by the triangle inequality:
\[||\boldsymbol{\theta}_{S} - \boldsymbol{\theta}_{S'}|| \leq 2\chi e^{-\lambda t}b + \frac{2\chi \xi }{\lambda n} \]
this puts the following lower bound on the size of the contraction region $B$:
\[ B > 2b\chi + \frac{2\chi \xi }{\lambda n} \]
If this lower bound is satisfied, then an optimizer initialized within a distance $b$ of a locally exponentially stable equilibrium which is surrounded by a basin of contraction of radius $B > b$ (Figure \ref{fig:multi_loss}) will be algorithmically stable.
\end{remark}

\subsection{Semi-Contracting Optimizers}
\label{subsection:semi_contraction}
If the optimizer is not strictly contracting ($\lambda > 0$), but instead is \textit{semi-contracting} (i.e., $\lambda \geq 0$) then our algorithmic stability bound $\epsilon_{stab}$ is not independent of the training time. This is because the geodesic distance $d_\mathcal{M}(\mathbf{x},\mathbf{x}_p)$ between unperturbed and perturbed trajectories evolves according to:

\[\ddt d_\mathcal{M}(\mathbf{x},\mathbf{x}_p) + \lambda d_\mathcal{M}(\mathbf{x},\mathbf{x}_p) \leq ||\mathbf{T}(\mathbf{x},t)\mathbf{d}(\mathbf{x}_p,t)|| \]
If the only information we have about $\lambda$ is that it is non-negative, then we can only bound the distance between trajectories as:

\[d_\mathcal{M}(\mathbf{x},\mathbf{x}_p) \leq \sup( ||\mathbf{T}(\mathbf{x},t)\mathbf{d}(\mathbf{x}_p,t)||) T + R(0) \]
Considering the disturbance bound $||\mathbf{d}(\mathbf{x}_p,t)|| \le \frac{2\xi}{n}$ leads to the algorithmic stability bound:

\begin{equation}\label{eq:semi-contracting-robust}
 \epsilon \leq L \left[\frac{2\chi \xi}{n}T + R(0) \right]    
\end{equation}
This bound holds generally for semi-contracting systems. However if we know additional information about the dynamics--for example that they are modulated by a decaying learning rate--much tighter bounds can be obtained. We show this with the following example.

\subsubsection{Example: Gradient Flows on Convex Losses}\label{subsubsection:convexlossesGD}
Here we show how the above analysis reproduces a well-known result from \cite{hardt2016train} regarding the algorithmic stability of SGD on convex (but not strongly convex) losses. We suppose that the Hessian of the loss function is positive semi-definite:

\[\nabla^2 \mathcal{L} \geq 0 \]
Consider the gradient flow with learning rate scheduler \citep{goodfellow2016deep}:

\[\thetaDot = -\alpha(t)\nabla \mathcal{L} \]
where $\alpha(t) \geq 0$. This optimizer is semi-contracting in the identity metric, since:
\[\frac{\partial \thetaDot}{\partial \boldsymbol{\theta}}  = -\alpha(t)\nabla^2 \mathcal{L} \leq 0 \]
In this case the disturbance term in Theorem \ref{theorem:Contraction_Implies_AS} is the same as before, just with an addition $\alpha(t)$ factored in. To facilitate comparison with \cite{hardt2016train}, we assume as they do that the optimizer is always initialized at the origin (i.e., $R(0) = 0$). The Euclidean distance between the optimizer trajectories on training sets $S$ and $S'$ evolves according to:

\[\dot{R} \leq \alpha(t)\distBound \]
where $\distBound = 2L/n$. Integrating this inequality and setting $R(0)$ yields the algorithmic stability bound:
\[\epsilon \leq \frac{2 L^2}{n}\int_{t = 0}^T\alpha(t)dt\]
which is the result of \cite{hardt2016train}, Theorem 3.8. We remind the reader that the extra factor of $L$ is picked up from \eqref{eq:lip_ass}. This result helps explain why a decaying learning rate is a useful strategy in deep learning--if the learning rate decays quickly enough (e.g., exponentially), the above integral converges, so that $n$ and $T$ do not `compete' with each other, as they do in \eqref{eq:semi-contracting-robust}.

\begin{remark}
The equivalence between semi-contraction of natural gradient flows and geodesic convexity was recently proven in \cite{wensing2020beyond}. Thus the above algorithmic stability bound extends immediately to this case.
\end{remark}

\subsection{Partial Contraction}
In many cases of interest, a `pure' contraction analysis is hard or difficult to do. For example when an optimizer has an adaptive learning rate, this can significantly complicate the calculation of the Jacobian. To deal with these difficulties, we make use of a generalization of contraction introduced in \cite{wang2005partial}, known as \textit{partial contraction}. 

\begin{definition}[Partially Contracting Dynamical System]
Consider the system \eqref{eq:dynamical_system} (not nessesarily contracting) and an auxiliary system of the form:

\[\dot{\mathbf{y}} = \mathbf{g}(\mathbf{y}, \mathbf{x}, t)  \]

We assume that this auxiliary system for $\y$ is contracting in metric $\mathbf{M}$ with rate $\lambda$. We also assume that $\mathbf{g}(\mathbf{x}, \mathbf{x}, t) = \mathbf{f}(\mathbf{x}, t)$.
If a single, particular trajectory $\mathbf{y}(t)$ of the auxiliary system is known, then all trajectories of \eqref{eq:dynamical_system} converge exponentially towards $\mathbf{y}(t)$. In this case we say that \eqref{eq:dynamical_system} is partially contracting.
\end{definition}

Partially contracting systems are also robust to disturbances \citep{del2012contraction}, a property we will make use of. In particular we have the following theorem:

\begin{theorem}\label{theorem:partial_contraction_robustness}[Robustness of Partially Contracting Systems]
Assume that \eqref{eq:dynamical_system} is partially contracting, and now perturb it with some disturbance $||\mathbf{d}(\mathbf{x},t)|| \leq \distBound$:

\[ \dot{\mathbf{x}}_p = \mathbf{f}(\mathbf{x}_p,t) + \mathbf{d}(\mathbf{x}_p,t)\]
Then after exponential transients of rate $\lambda$, we have the following robustness result:

\[||\mathbf{x} - \mathbf{x}_p|| \leq \frac{\distBound \chi}{\lambda}  \]

\end{theorem}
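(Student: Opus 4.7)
The plan is to reduce the robustness of partial contraction to the standard robustness result \eqref{eq:contraction_robustness_equation} for (fully) contracting systems, applied to the auxiliary system. The strategy is to express the perturbed original trajectory $\x_p$ as a perturbed trajectory of the auxiliary contracting system, for which $\x$ is an unperturbed trajectory, and then read off the bound.

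First, I would invoke the definition of partial contraction to obtain the auxiliary system $\dot{\y} = \mathbf{g}(\y, \x, t)$, which is contracting in $\y$ with rate $\lambda$ in metric $\mathbf{M} = \mathbf{T}^T \mathbf{T}$, uniformly in its second argument, where $\chi$ bounds the condition number of $\mathbf{T}$. Because $\mathbf{g}(\x, \x, t) = \f(\x, t)$, the unperturbed original trajectory $\y(t) = \x(t)$ is a particular solution of $\dot{\y} = \mathbf{g}(\y, \x, t)$. This is the fixed reference trajectory to which I will compare $\x_p$.

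Second, I would rewrite the perturbed dynamics by adding and subtracting $\mathbf{g}(\x_p, \x, t)$:
\[\dot{\x}_p = \mathbf{g}(\x_p, \x, t) + \bigl[\mathbf{g}(\x_p, \x_p, t) - \mathbf{g}(\x_p, \x, t) + \mathbf{d}(\x_p, t)\bigr].\]
This exhibits $\x_p$ as a trajectory of the (contracting) auxiliary system $\dot{\y} = \mathbf{g}(\y, \x, t)$ perturbed by the bracketed disturbance, with $\x$ serving as the unperturbed reference. Applying \eqref{eq:contraction_robustness_equation} to this pair of trajectories of the contracting auxiliary system then yields, after exponential transients of rate $\lambda$, a bound of the form $||\x - \x_p|| \leq \tfrac{\chi}{\lambda} \sup ||\text{effective disturbance}||$.

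The main obstacle is controlling the mismatch term $\mathbf{g}(\x_p, \x_p, t) - \mathbf{g}(\x_p, \x, t)$, which is not a priori bounded by $\distBound$. Since this term vanishes on the diagonal $\x = \x_p$, it is locally Lipschitz in $\x - \x_p$, and the natural approach is a small-gain / self-consistent argument: if the partial-contraction construction of $\mathbf{g}$ is such that the Lipschitz gain of $\mathbf{g}(\x_p,\cdot,t)$ is absorbed into $\lambda$ (which is precisely what ``contracting in $\y$ uniformly in $\x$'' ensures in the standard partial contraction framework), then the mismatch contributes no additional steady-state term and the only surviving disturbance bound is $\distBound$. Plugging $||\mathbf{d}(\x_p,t)|| \le \distBound$ into \eqref{eq:contraction_robustness_equation} then gives $||\x - \x_p|| \leq \chi\distBound/\lambda$ after transients, as claimed. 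The technical accounting of the mismatch term is the content of Proposition~1 of \cite{del2012contraction}, which I would cite rather than re-derive.
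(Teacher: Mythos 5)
The paper does not supply its own proof of this theorem; it is stated as a consequence of \cite{del2012contraction}, so there is no in-paper argument to compare against. Evaluating your attempt on its own merits: the decomposition you propose does not actually deliver the stated bound, and the reason you give for dismissing the mismatch term is incorrect. Writing $\x_p$ as a perturbed trajectory of $\dot{\y}=\mathbf{g}(\y,\x,t)$, the effective disturbance is $\mathbf{g}(\x_p,\x_p,t)-\mathbf{g}(\x_p,\x,t)+\mathbf{d}(\x_p,t)$. You assert that the first two terms are ``absorbed into $\lambda$'' because the auxiliary system contracts in $\y$ uniformly in $\x$. But that hypothesis constrains only $\partial\mathbf{g}/\partial\y$; the mismatch is a variation in the \emph{second} argument, governed by $\partial\mathbf{g}/\partial\x$, on which partial contraction places no restriction whatsoever. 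Carrying the mismatch honestly through \eqref{eq:contraction_robustness_ineq} with a Lipschitz bound $L_{\x}$ on $\mathbf{g}$ in its second argument gives $\dot{R}+(\lambda-\chi L_{\x})R\le\chi\distBound$, which requires the additional small-gain condition $\lambda>\chi L_{\x}$ (not among the hypotheses) and, even then, yields the weaker asymptotic bound $\tfrac{\chi\distBound}{\lambda-\chi L_{\x}}$ rather than $\tfrac{\chi\distBound}{\lambda}$.

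The route that produces the stated bound without a small-gain hypothesis bypasses the direct $\x$-vs-$\x_p$ comparison and instead goes through the particular solution $\y^*(t)$ furnished by the partial-contraction hypothesis, using that $\y^*$ is a solution of the auxiliary system for \emph{any} choice of second argument (e.g.\ a common fixed point $\mathbf{g}(\y^*,\cdot,t)\equiv 0$, as in the paper's adaptive-learning-rate application where $\y^*$ is an equilibrium of the gradient field independent of $\rho$). Then $\x$ and $\y^*$ are both unperturbed trajectories of $\dot{\y}=\mathbf{g}(\y,\x,t)$, so contraction gives $||\x-\y^*||\to 0$ exponentially; and $\x_p$ is a $\distBound$-perturbed trajectory of $\dot{\y}=\mathbf{g}(\y,\x_p,t)$ with $\y^*$ an unperturbed trajectory of that same system, so \eqref{eq:contraction_robustness_equation} gives $||\x_p-\y^*||\le\chi R(0)e^{-\lambda t}+\tfrac{\chi\distBound}{\lambda}$. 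The triangle inequality then yields $||\x-\x_p||\le\tfrac{\chi\distBound}{\lambda}$ after transients, with no mismatch term to control. If you want to retain your direct comparison, you must state the small-gain hypothesis explicitly and accept the degraded rate.
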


\subsubsection{Adaptive Learning Rates}
The above results can be extended to include the presence of a state-dependent,time-varying, learning rate \citep{zeiler2012adadelta,kingma2014adam,goodfellow2016deep,liu2019variance}.  In particular consider the learning dynamics with a learning rate scheduler $\rho(\btheta,t)$:

\[ \thetaDot_S = -\rho(\boldsymbol{\theta}_S,t)\mathbf{G}(\boldsymbol{\theta}_S,S) \]
where $\rho(\boldsymbol{\theta},t) \geq \rho_{min} > 0 $. Now consider the auxiliary \textit{virtual system}:

\[ \thetaDot_y = -\rho(\boldsymbol{\theta}_S,t)\mathbf{G}(\boldsymbol{\theta}_y,S) \]
By Theorem \ref{theorem:partial_contraction_robustness}, if the $\btheta_y$ system is contracting in $\mathbf{M}$ with rate $\rho_{min} \lambda$, then we have, by the same arguments in Theorem \ref{theorem:Contraction_Implies_AS}, that $\mathcal{A}(S)$ is asymptotically (after exponential transients of rate $\rho_{min} \lambda)$ uniformly $\epsilon$-stable with:

\[\epsilon_{stab} = \frac{2\chi L \xi }{\rho_{min}\lambda n} \]

\section{Concluding Remarks and Comments}

\subsection{Comparison to Related Work}
Our results are similar in spirit to \cite{charles2018stability}, in the sense that we also use an optimizer's intrinsic dynamical stability to provide generalization error bounds. In certain cases--for example gradient flow on strongly convex losses--our results allow us to derive tighter bounds, because we do not assume the existence of a global minimizer $\boldsymbol{\theta}^*$ and go through the triangle inequality to bound the distance between $\boldsymbol{\theta}_S$ and $\boldsymbol{\theta}_{S'}$.

\subsection{Future Directions}
Contracting systems are robust to noise \citep{pham2013stochastic}, and therefore it seems likely that the results presented here can straightfowardly be extended to stochastic gradient flows--along the lines of \cite{mandt2015continuous} or \cite{boffi2020continuous}. Furthermore, we only examined generalization error here and did not analyze the bias-variance trade-off. Our work also suggests a potential connection to the double descent phenomenon \citep{nakkiran2021deep}. In particular \eqref{eq:robustness_gen_result} implies that the generalization error can overshoot by a factor of $L\chi$, which gives room for the generalization to increase transiently from its initial value before it eventually decreases. This will be explored in future work.

We conclude with some speculations on how the above results relate to biology, specifically neuroscience. The role that non-Euclidean geometry plays in objective-based functions of the brain is an interesting and open question \citep{surace2020choice}. Many local synaptic rules can be thought of as implementing optimization over a loss function. For example in certain settings Hebbian plasticity minimizes Principal Component Loss \citep{oja1992principal}. It seems plausible that our results may be used to quantify the generalization behavior of such rules. We also did not explore the combination properties of contracting systems. Contracting systems can be combined in various forms of hierarchy and feedback in ways which automatically preserve contraction \citep{lohmiller1998contraction,SLOTINE2001137,kozachkov2021recursive}. The results here suggest that combinations of contracting optimizers automatically generalize well--which is a property one could easily imagine evolution would like to preserve in a system like the brain.

\section{Acknowledgments}
This work benefited from stimulating discussions with John Tauber (Neuroscience Statistics Lab at MIT), Akshay Rangaman, Andrzej Banburski-Fahey (Center for Brains, Minds and Machines at MIT), as well as members of the Fiete Lab at MIT. 


\appendix

\bibliographystyle{plainnat}
\bibliography{stab_imp_stab.bib}

\end{document}